\newcommand{\bI}{\mathbf{I}}
\newcommand{\bM}{\mathbf{M}}
\newcommand{\bX}{\mathbf{X}}
\newcommand{\bY}{\mathbf{Y}}
\newcommand{\bZ}{\mathbf{Z}}
\newcommand{\bA}{\mathbf{A}}
\newcommand{\bB}{\mathbf{B}}
\newcommand{\bC}{\mathbf{C}}
\newcommand{\bD}{\mathbf{D}}
\newcommand{\bF}{\mathbf{F}}
\newcommand{\bS}{\mathbf{S}}
\newcommand{\bx}{\mathbf{x}}
\newcommand{\bz}{\mathbf{z}}
\newcommand{\bu}{\mathbf{u}}
\newcommand{\bv}{\mathbf{v}}
\newcommand{\bTheta}{\boldsymbol{\Theta}}
\newcommand{\0}{\mathbf{0}}
\newcommand{\1}{\mathbf{1}}
\newcommand*{\QEDB}{\hfill\ensuremath{\square}}
\DeclareRobustCommand{\rchi}{{\mathpalette\irchi\relax}}
\newcommand{\irchi}[2]{\raisebox{\depth}{$#1\chi$}} 
\newcommand{\argmin}{\operatornamewithlimits{arg\,min}}
\newcommand{\argmax}{\operatornamewithlimits{arg\,max}}
\newtheorem{theorem}{Theorem}
\newtheorem{lemma}{Lemma}
\newtheorem{corollary}{Corollary}
\newtheorem{remark}{Remark}%
\title{\textbf{Classification of High-dimensional Time Series in Spectral Domain using Explainable Features}}
\author{%
  Sarbojit Roy
  \\
  \texttt{\small sarbojit.roy@kaust.edu.sa} \\
   and \\
   Malik Shahid Sultan \\
  \texttt{\small malikshahid.sultan@kaust.edu.sa} \\
   and \\
   Hernando Ombao \\
  \texttt{\small hernando.ombao@kaust.edu.sa}\\ \\
  \small Computer, Electrical, and Mathematical Sciences and Engineering\\
  \small King Abdullah University of Science and Technology\\
  \small Saudi Arabia -- 23955
}
\begin{document}

\maketitle

\begin{abstract}
Interpretable classification of time series presents significant challenges in high dimensions. Traditional feature selection methods in the frequency domain often assume sparsity in spectral density matrices (SDMs) or their inverses, which can be restrictive for real-world applications. In this article, we propose a model-based approach for classifying high-dimensional stationary time series by assuming sparsity in the difference between inverse SDMs. Our approach emphasizes the interpretability of model parameters, making it especially suitable for fields like neuroscience, where understanding differences in brain network connectivity across various states is crucial. The estimators for model parameters demonstrate consistency under appropriate conditions. We further propose using standard deep learning optimizers for parameter estimation, employing techniques such as mini-batching and learning rate scheduling. Additionally, we introduce a method to screen the most discriminatory frequencies for classification, which exhibits the sure screening property under general conditions. The flexibility of the proposed model allows the significance of covariates to vary across frequencies, enabling nuanced inferences and deeper insights into the underlying problem. The novelty of our method lies in the interpretability of the model parameters, addressing critical needs in neuroscience. The proposed approaches have been evaluated on simulated examples and the `Alert-vs-Drowsy' EEG dataset.
\end{abstract}

\section{Introduction}\label{intro}

Let $\{X(t)=\left (\mathrm{X}_{1}(t), \ldots, \mathrm{X}_{p}(t)\right )^\top,1\leq t\leq T\}$ be a $p$-dimensional Gaussian stationary time series of length $T$ and $Y\in \{1,2\}$ be the class label of associated with $\bX=[X(1), \ldots, X(T)]\in \mathbb{R}^{p\times T}$. We assume that the series is centered, i.e., $\mathrm{E}[X(t)]=0_p$ for all $t$. The class densities are denoted by $f_1$ and $f_2$, i.e.,  $\bX|Y=1\sim f_1$ and $\bX|Y=2\sim f_2$. Also, let $(\bX_l,Y_l)$ be the independent and identically distributed (iid) copies of $(\bX,Y)$, for $1\leq l\leq n$. The class priors are given by $0<\mathrm{P}[Y=j]=\pi_j<1$ for $j=1,2$ with $\pi_1+\pi_2=1$. Our aim is to predict the unknown class label of a time series $\bz = [z(1), \ldots, z(T)]$ with $z(t)= \left (\mathrm{z}_1(t), \ldots, \mathrm{z}_p(t)\right )^\top$,  $1\leq t\leq T$. If $f_1, f_2, \pi_1$ and $\pi_2$ are known, then the optimal classifier, i.e., the Bayes classifier assigns $\bz$ to class 1 if $\ln{\pi_1}+\ln{f_1(\mathbf{z})}>\ln{\pi_2}+\ln{f_2(\mathbf{z})}$, and to class 2, otherwise.
In this article, we focus on classifying time series in the spectral domain. In brain network analysis, it is essential to understand how connectivity patterns between brain regions differ across states, such as drowsy and alert. These patterns can vary by frequency; for example, high-frequency EEG components might be similar, while low-frequency components differ significantly. Thus, classifying in the spectral domain offers three main advantages: (a) identifying relevant frequencies enhances understanding of brain connectivity, (b) removing irrelevant frequencies improves decision rule performance, and (c) frequency-specific connectivity information provides deeper insights into brain region associations. This approach to classification using spectral information is, of course, not limited to neuroscience, and can be applied to various other fields, e.g., economic panel data, and systems biology.

In the spectral domain, the log densities can be approximated by the Whittle log-likelihood:
\begin{gather}\label{whittleapprox}
   \ln{f_l(\mathbf{z})}\approx W_l(\bz) = \sum\limits_{\omega_k\in \Omega_T}\left [\ln{|{\bTheta_l}(\omega_k)|} -z^*(\omega_k){\bTheta_l}(\omega_k)z(\omega_k)\right ], 
\end{gather}
where $ \Omega_T=\{\omega_k=k/T,\ k=1,\ldots, T^\prime\}$ with $T^\prime = [T/2]-1$ is the set of Fourier frequencies, $z(\omega_k)\in \mathbb{C}^p$ is Discrete Fourier Transform (DFT) of a series $z(t)$ at frequency $\omega_k$ and $\bTheta_l(\omega)\in \mathbb{C}^{p\times p}$ is the inverse of spectral density matrix (SDM) of class $l$ for $l=1,2$ \citep[see, e.g., ][]{shumway2000time}. We denote the SDMs of class $l$ by $\bS_l(\omega_k)$, for $l=1,2$. Here $z^*$ denotes the conjugate transpose of $z\in \mathbb{C}^p$. A classifier based on the Whittle approximation is defined as
\begin{gather}\label{class_BW}
    \delta_\text{W}(\mathbf{z}) = \begin{cases}
    1,&\mathcal{L}(\bz)=\ln {\left (\pi_1/\pi_2\right )} + W_1(\mathbf{z})-W_2(\mathbf{z})>0,\\
    2,&\text{ otherwise}.
    \end{cases}
\end{gather}
It is clear from equations \eqref{whittleapprox} and \eqref{class_BW} that if $\bTheta_1(\omega_k)=\bTheta_2(\omega_k)$ for some $\omega_k\in\Omega_T$, then the frequency $\omega_k$ plays no role in classification. Such frequencies, collectively denoted by the set
\begin{align}\label{nset}
    \Omega^0_T=\{\omega_k: \bTheta_1(\omega_k)=\bTheta_2(\omega_k), \omega_k\in\Omega_T\},
\end{align}
are referred to as `noise'. On the other hand, the set $\Omega^\bD_T = \Omega_T\setminus \Omega^0_T$ contains all necessary information relevant for the classification, and is referred to as `signal'. Estimating $\Omega^0_T$ and $\Omega^\bD_T$ helps us gain valuable insight into how connectivity pattern varies across different states of the brain. Subsequently, removing the irrelevant frequencies will further improve the accuracy of $\delta_W$.

A plugin-based estimate of the discriminant $\mathcal{L}(\bz)$ is given by $\hat{\mathcal{L}}(\bz) = \ln {\left (n_1/n_2\right )} + \hat{W}_1(\mathbf{z})-\hat{W}_2(\mathbf{z})$ where $\hat{W}_l(\mathbf{z})$ is obtained by simply plugging the estimates  $\hat{\bTheta}_l(\omega_k)$ in \eqref{whittleapprox} for $\omega_k\in \Omega_T$. However, in high dimensions, estimation of the inverse SDMs is challenging, since the sample SDMs become ill-conditioned when $p$ is large. One needs additional assumptions on the structure of $\bTheta_l(\omega_k)$ for its consistent estimation. \citep[][]{ledoit2020analytical, sun2018large,
fiecas2017shrinkage,14-EJS977,Pourahmadi}{}{}. \cite{14-EJS977} proposed a data-driven shrinkage method to obtain consistent estimators of SDMs in high dimensions. Another common approach is to assume sparsity on the matrix $\bTheta_l(\omega_k)$ for $l=1,2$ \citep[see, e.g., ][]{l1min_TcaiJASA2011, MR4017528}.  
Recall that the $j_1,j_2$-th element of $\bTheta_l(\omega_k)$ captures the conditional dependence between the $j_1$-th and $j_2$-th covariate given the rest, at frequency $\omega_k$. The sparsity of $\bTheta_l(\omega_k)$ implies that only a few among the $p$ covariates are conditionally dependent on each other at frequency $\omega_k$. Such an assumption can be restrictive and inappropriate in many real-world situations, such as neurological data where different regions in the brain are known to have a dense network. Moreover, the existing methods are primarily interested in obtaining a consistent estimator of the inverse SDM in high dimensions. On the other hand, we aim to obtain estimates that minimize the classification error. It is well known that consistent estimation does not always lead to improved classification accuracy \citep[see, e.g., ][]{cai2011direct, mai2012direct}. This motivates us to develop a method of estimating the model parameters from a different point of view. Instead of sparsity in $\bTheta_1(\omega_k)$ and $\bTheta_2(\omega_k)$ for each $\omega_k\in\Omega_T$, we propose a model that only requires the difference between inverse SDMs, i.e., $\{\bTheta_2(\omega_k)-\bTheta_1(\omega_k)\}$ to be sparse. 
Observe that the set $\Omega^\bD_T$ can also be expressed as $\Omega^\bD_T=\{\omega_k:\bTheta_1(\omega_k)\neq \bTheta_2(\omega_k),\omega_k\in\Omega_T \}$, and the discriminant $\mathcal{L}(\bz)$ in \eqref{class_BW} can be written as 
\begin{align}\label{disc0}
    \mathcal{L}(\bz)=\ln {\frac{{\pi}_1}{{\pi}_2}}+ \sum\limits_{\omega_k\in {\Omega}^{\bD}_T}\left [{z}(\omega_k)^*{{\bD}}(\omega_k) {z}(\omega_k)+\ln{|{{\bD}}(\omega_k){{\bS}}_{1}(\omega_k) + \bI|} \right ],
\end{align}
where $\bD(\omega_k) = \bTheta_2(\omega_k)-\bTheta_1(\omega_k)$ at frequency $\omega_k\in \Omega^\bD_T$ and $\bI$ is the $p\times p$ identity matrix. It is clear from equation \eqref{disc0} that the $\mathcal{L}(\bz)$ depends on the unknown parameters only through the matrices $\bD_k$ and $\bS_{1k}$ for $\omega_k\in\Omega^\bD_T$.  
Expressing the discriminant \(\mathcal{L}(\bz)\) this way has two advantages: (a) \textit{Interpretability}: In neurological data, brain states like 'alert' and 'drowsy' differ in specific frequency bands. Our framework identifies these frequencies in \(\Omega^\bD_T\). The parameter \(\bD(\omega_k), \omega_k \in \Omega_T\) indicates the difference in dependence between covariates, where the \((j,l)\)-th element shows the difference between the \(j\)-th and \(l\)-th covariates. (b) \textit{Efficiency}: We avoid inverting high-dimensional sample SDMs by directly estimating \(\bD_k\) matrices. Furthermore, this approach allows the sparsity pattern to vary across frequencies, highlighting which covariates contribute to differences at each frequency \(\omega_k\).

Among feature-based classification methods, Shapelet-based methods are popular for their interpretability in feature-based classification but scale poorly in multivariate time series classification (MTSC). \citet{karlsson2016generalized} introduced a scalable MTSC method using randomly selected shapelets in decision tree forests. Recent shapelet-based methods like the Shapelet Transform Classifier (STC) and Random Interval Spectral Ensemble focus on univariate series and can be combined into ensembles such as HIVE-COTE, but these aren't truly multivariate as components operate independently on each dimension. Canonical interval forest (CIF) uses time-series trees but lacks interpretability. An interpretable method by \citet{le2019interpretable} uses symbolic representation but is limited to linear classifiers. MiniROCKET and ROCKET achieve high accuracy but sacrifice interpretability, as do deep learning methods like InceptionTime and TapNet. \citet{ruiz2021great} analyzed several algorithms using the UEA multivariate time series classification archive 2018 and found HIVE-COTE, CIF, ROCKET, and STC significantly more accurate than the dynamic time warping algorithm, but except for ROCKET, the other methods have long training times and as mentioned above, all four lack interpretability. Hence, there is a need for interpretable MTSC methods, especially for high-dimensional series.

\vspace{0.5cm}
\noindent \textbf{Our contribution}: 
We propose a method for classifying high-dimensional stationary Gaussian time series in the frequency domain and develop a procedure for identifying the most relevant frequencies for classification. The main advantages of our method are:

 \begin{itemize}
    \item \textit{Interpretable model parameters}: We believe, our method is the first method that leads to interpretable results in the regime of high-dimensional time series classification.
    \item \textit{Consistent estimation}: We develop theoretical results that hold in the ultrahigh-dimensional settings when the dimension may increase non-polynomially with the sample size. 
     \item \textit{Sparse difference assumption}: Unlike traditional methods, we assume sparsity in the difference between inverse spectral density matrices (SDMs) rather than in the SDMs themselves, achieving theoretical consistency under less restrictive conditions.
     \item \textit{Effective frequency screening}: Our frequency screening method enjoys the {\it sure screening property} \citep{fan2008sure}, and ranks frequencies by their importance in classification. 
    \item \textit{Flexible framework}: The method allows the importance of covariates to vary across frequencies, providing deeper insights into real-world problems such as brain network connectivity.
\end{itemize}
In Section \ref{method}, we describe the estimation procedure. The theoretical properties of the proposed estimators are presented in Section \ref{thprop}. We demonstrate the performance of the classifier on simulated and real data sets in Sections \ref{simstud} and \ref{real}, respectively. We end this article with a discussion on the limitations of the proposed method and potential directions for future research in Section \ref{limit}. All proofs are detailed in the Appendix \ref{appen}.  
\section{Methodology}\label{method}

{\it Notations and definitions:} The symbol $\mathrm{i}$ denotes the square root of $(-1)$. For an integer $p\geq 1$, $[p]$ denotes the set $\{1,\ldots, p\}$. Given a vector $\bv\in \mathbb{R}^p$, let $\|\bv\|_1$, $\|\bv\|_2$ and $\|\bv\|_\infty$ denote its $\ell_1$, $\ell_2$ and $\ell_\infty$ norms, respectively. We use $supp(\bv)$ to denote the support of $\bv$. For a $p\times p$ matrix $\bM$, we write $\|\bM\|_\mathrm{F}=\sum_{i,j}^p (A^2_{i,j})^{1/2}$ for its Frobenius norm, $\|\bM\|_0 = \sum_{i,j}\mathbb{I}[\bM(i,j)\neq 0]$, $\|\bM\|_1 = \sum_{i,j}|\bM(i,j)|$ and $\|\bM\|_\infty = \max_{i,j}|\bM(i,j)|$. In addition, we define $\|\bM\|_{1,\infty} = \max_i\sum_{j}|\bM_{i,j}|$ to be the $\ell_{1,\infty}$ norm of the matrix $\bM$. Let $\lambda_j(\bM)$ denote the $j$-th eigen value of $\bM$ with $\lambda_1(\bM)\geq \cdots \geq \lambda_p(\bM)$. $\bM\succ 0$ implies that $\bM$ is a positive definite matrix. We further use $Vec(\bM)$ for the $p^2$-dimensional vector obtained by stacking the columns of $\bM$, and $\bM_1\bigotimes\bM_2$ for the Kronecker product between $\bM_1$ and $\bM_2$. We use the symbol $\0_p$ to denote both the $p\times 1$ zero vector and the $p\times p$ zero matrix, depending on the context. The $p\times p$ identity matrix is denoted by $\bI_p$. $\mathbb{I}[A]$ denotes the indicator of $A$. The subscript $p$ is sometimes dropped for brevity when the dimension is clear. For complex-valued vector $Z = Z^\mathcal{R} + \mathrm{i}Z^\mathcal{I}\in\mathbb{C}^p$, and matrix $\bM = \bM^\mathcal{R}+\mathrm{i}\bM^\mathcal{I}\in\mathbb{C}^{p\times p}$, we define 
\begin{gather*}
    \tilde{Z} = \begin{bmatrix}
        Z^\mathcal{R}\\
        Z^\mathcal{I}
    \end{bmatrix}\in \mathbb{R}^{2p},\text{ and }
    \widetilde{\bM}=\begin{bmatrix}
            \bM^\mathcal{R}&\bM^\mathcal{I}\\
            -\bM^\mathcal{I}&\bM^\mathcal{R}
        \end{bmatrix}\in \mathbb{R}^{2p\times 2p}.
\end{gather*}

We use $c,c_1,c_2,\ldots, C, C_1, C_2,\ldots,$ to denote the constants that do not depend on $n,p, T$, and their values may vary from place to place throughout this article.
To reduce the notational complexity, we write ${z}(\omega_k)$, ${\bS}_{l}(\omega_k)$, ${\bTheta}_{l}(\omega_k)$, and ${\bD}(\omega_k)$ as ${z}_k$, ${\bS}_{lk}$, ${\bTheta}_{lk}$, and ${\bD}_{k}$, respectively, from here onwards. With the notations introduced above, the discriminant $\mathcal{L}(\bz)$ in equation \eqref{disc0} can be written as
\begin{align}\label{disc_fin}
    \mathcal{L}(\bz)=\ln {\frac{{\pi}_1}{{\pi}_2}}+ \frac{1}{2}\sum\limits_{\omega_k\in {\Omega}^{\bD}_T}\left [\tilde{z}_k^\top{\widetilde{\bD}}_k \tilde{z}_k+\ln{|{\widetilde{\bD}}_k{\widetilde{\bS}}_{1k} + \bI|} \right ].
\end{align}
Now, we introduce our methods for estimating the model parameters, focusing on identifying the set of relevant frequencies \(\Omega^\bD_T\) and the difference matrices \(\widetilde{\bD}_k\) for all \(\omega_k \in \Omega^\bD_T\). We present two distinct approaches to estimate \(\widetilde{\bD}_k\) for \(\omega_k \in \hat{\Omega}^\bD_T\). In Section \ref{est_Domega}, we formulate the estimation as a problem of minimizing a convex loss function. Then, in Section \ref{estnn}, we modify this convex loss function to simultaneously minimize classification errors. The adjusted loss function includes a non-convex component and is solved using the Adam optimization algorithm \citep{kingma2014adam}. Lastly, in Section \ref{freqscreen}, we utilize the estimates of \(\widetilde{\bD}_{k}\) to develop a screening procedure and derive \(\hat{\Omega}^\bD_T\).
\subsection{Estimation of difference matrices $\widetilde{\bD}_k$}\label{est_Domega}
Fix $\omega_k\in \Omega_T$, and recall that $\widetilde{\bD}_k=\widetilde{\bTheta}_{2k}-\widetilde{\bTheta}_{1k}$. One way to estimate $\widetilde{\bD}_k$ is to estimate the inverse matrix $\widetilde{\bTheta}_{lk}$ for $l=1,2$, and consider $\hat{\widetilde{\bD}}_k$ to be the difference of the estimated matrices. 
\citet{cai2011constrained} developed a method to estimate high-dimensional precision matrices based on constrained $l_1$-minimization (CLIME). \citet{MR4017528} and \citet{krampe2022frequency} proposed CLIME-type estimators of a precision matrix in the spectral domain and studied its finite sample behavior. However, the consistency of these estimators is achieved under sparsity which may not be a realistic assumption in neurological data.  
Motivated by the work of \citet{yuan2017differential}, we take a direct approach to estimate the matrix $\widetilde\bD_k$. Instead of assuming sparsity on $\widetilde{\bTheta}_{lk}$ for $l=1,2$, we assume that their difference $\widetilde\bD_k = \widetilde{\bTheta}_{2k}-\widetilde{\bTheta}_{1k}$ is sparse. 
Observe that $\widetilde{\bS}_{1k}\widetilde{\bD}_k\widetilde{\bS}_{2k} = \widetilde{\bS}_{1k} - \widetilde{\bS}_{2k}\text{ and }\widetilde{\bS}_{2k}\widetilde{\bD}_k\widetilde{\bS}_{1k} = \widetilde{\bS}_{1k} - \widetilde{\bS}_{2k}$, i.e., 
$(\widetilde{\bS}_1\widetilde{\bD}_k\widetilde{\bS}_{2k}+\widetilde{\bS}_{2k}\widetilde{\bD}_k\widetilde{\bS}_{1k})/2 = \widetilde{\bS}_{1k} - \widetilde{\bS}_{2k}.$
We consider the following {\it D-trace loss} function:
\begin{align}\label{dtraceloss}
        & L(\widetilde{\bD}_k; \widetilde{\bS}_{1k}, \widetilde{\bS}_{2k}) = \frac{1}{4}\left [\langle\widetilde{\bS}_{1k}\widetilde{\bD}_k,\widetilde{\bD}\widetilde{\bS}_{2k}\rangle+\langle\widetilde{\bS}_{2k}\widetilde{\bD}_k,\widetilde{\bD}_k\widetilde{\bS}_{1k}\rangle\right ] - \langle\widetilde{\bD}_k,\widetilde{\bS}_{1k}-\widetilde{\bS}_{2k}\rangle,
\end{align}
where $\langle\bA,\bB\rangle=tr(\bA\bB^\top)$.
The above loss is convex with respect to $\widetilde{\bD}_k$ and has a unique minimum satisfying $\widetilde{\bD}_k = \widetilde{\bTheta}_{2k} - \widetilde{\bTheta}_{1k}$ for all $\omega_k\in\Omega^\bD_T$. We obtain sparse estimates of $\widetilde{\bD}_k$ by minimizing a lasso penalized $L(\widetilde{\bD}_k, \widetilde{\bS}_{1k}, \widetilde{\bS}_{2k})$. We first estimate the SDMs for the two classes. For a given training sample $\rchi_n=\{(\bX_1, Y_1),\ldots,(\bX_{n}, Y_n)\}$, let $\hat{\bS}^j_{k}$ denote the smoothed periodogram estimator of SDM based on $\bX_j$ for $j=1,\ldots, n$\ \citep[see, e.g., ][]{shumway2000time}. We define $\hat{\bS}_{lk} = \sum_j\mathbb{I}[Y_j=l]\hat{\bS}^j_{k}/n_1$ and $\hat{\widetilde{\bS}}_{lk} = \widetilde{\hat{\bS}}_{lk}$ for $l=1,2$ and $k=1,\ldots, T^\prime$. We estimate $\widetilde{\bD}_k$-s by minimizing the following:
\begin{align}\label{lassopen}
     \min\limits_{\widetilde{\bD}_1,\ldots, \widetilde{\bD}_{T^\prime}}\left \{\sum\limits_{\omega_k\in {\Omega}^\bD_T}L(\widetilde{\bD}_k; \hat{\widetilde{\bS}}_{1k}, \hat{\widetilde{\bS}}_{2k})+\lambda\sum\limits_{\omega_k\in {\Omega}^\bD_T}\|\widetilde{\bD}_k\|_1\right \},
\end{align} where $\lambda>0$ is a tuning parameter. This method is motivated by the sparse quadratic discriminant method by \citet{yuan2017differential} where the authors estimated the parameters in the context of differential graph estimation. The minimization problem can be solved using suitable off the shelf convex optimization solver. 
We used Adaptive Moment Estimation \cite{kingma2014adam} popularly known as ADAM as the optimizer to estimate the $\widetilde{\bD}_k$. We refer to Section \ref{estnn} for the details of implementation.
The tuning parameter $\lambda$ in \eqref{lassopen} is selected by minimizing the Generalized Information Criterion (GIC) proposed by \citet{gickim2012}. If $\Lambda_n$ is a sequence of $\lambda$ values, then we define.
\begin{align}\label{gic}
    \{\hat{\widetilde{\bD}}_{1},\ldots, \hat{\widetilde{\bD}}_{T^\prime}\}=\argmin\limits_{\lambda\in \Lambda_n}\frac{1}{n}\sum\limits_{\omega_k\in {\Omega}^\bD_T}\left \{L(\hat{\widetilde{\bD}}_k(\lambda), \hat{\widetilde{\bS}}_{1k},\hat{\widetilde{\bS}}_{2k}) + \log{(\log{n})}\log{p^2}\|\hat{\widetilde{\bD}}_k(\lambda)\|_0\right \},
\end{align}
where $\hat{\widetilde{\bD}}_k(\lambda)$ is the estimate of $\widetilde{\bD}_k$ for a given value of $\lambda$.

\subsection{Estimation of difference matrices using ADAM}\label{estnn}
Recall that our aim is to minimize the classification error of $\delta_W$ defined in \ref{class_BW}. Keeping that in mind, we further develop a method to simultaneously obtain estimates of $\widetilde{\bD}_k$. Let the posterior probability of class $l$ be denoted by $\pi_l(\bx) =\mathrm{P}[Y=l\mid \bx]$ for an observed time series $\bx\in \mathbb{R}^{p\times T}$. The probability $\pi_1(\bx)$ is defined as
\begin{align*}
    \pi_1(\bx)= \frac{\pi_1f_1(\bx)}{\pi_1f_1(\bx)+\pi_2f_2(\bx)}=\frac{\pi_1f_1(\bx)/\pi_2f_2(\bx)}{1+\{\pi_1f_1(\bx)/\pi_2f_2(\bx)\}} = \frac{e^{\ln{\frac{\pi_1}{\pi_2}+\ln{f_1(\bx) - \ln{f_2(\bx)}}}}}{1+e^{\ln{\frac{\pi_1}{\pi_2}+\ln{f_1(\bx) - \ln{f_2(\bx)}}}}}.
\end{align*}
Once again, based on the Whittle log-likelihood given in \eqref{whittleapprox}, we obtain an approximation of the posterior probabilities as $\pi_1(\bx) \approx \sigma(\bx) = e^{\mathcal{L}(\bx)}/(1+e^{\mathcal{L}(\bx)})$, where $\mathcal{L}(\bx)$ is as defined in \eqref{disc_fin}.
We further assume that the class label $Y$ of a given series $\bx$ follows a binomial distribution with parameter $\sigma(\bx)$. Thus, for a given training sample $\rchi_n$, the negative log-likelihood of $(Y_1,\ldots, Y_n)$ is given by
\begin{align*}
    NLL(\widetilde{\bD}_{1:T^\prime}, \Omega^{\bD}_T;\rchi_n) 
    = -\sum\limits_{j=1}^n\left \{\mathbb{I}[{Y_j=1}]\mathcal{L}(\bX_j) + \log{(1+e^{\mathcal{L}(\bX_j)})}\right \}.
\end{align*}
We now add the penalized {\it D-trace loss} given in \eqref{dtraceloss} to the above negative log-likelihood. Define
\begin{align}\label{lassonn}
\ell(\widetilde{\bD}_{1:T^\prime}, \Omega^{\bD}_T;\rchi_n) = NLL(\widetilde{\bD}_{1:T^\prime}, \Omega^{\bD}_T;\rchi_n)+\sum\limits_{\omega_k\in {\Omega}^\bD_T}\left (L(\widetilde{\bD}_k; \hat{\widetilde{\bS}}_{1k}, \hat{\widetilde{\bS}}_{2k})+\lambda\|\widetilde{\bD}_k\|_1\right ),
\end{align}
where $\lambda>0$ is a tuning parameter. We minimize $\ell$ with respect to $\widetilde{\bD}_{1:T^\prime}=\{\widetilde{\bD}_1,\ldots, \widetilde{\bD}_{T^\prime}\}$. In our experiments, we use cross-validation to tune the parameter $\lambda$. We use one-half of the data for training and the other half for validation. We use ADAM for minimizing both \eqref{gic} and \eqref{lassonn}. This choice is useful as ADAM has been shown to work well with a wide variety of convex and non-convex problems and training deep neural networks. It allows us to estimate $\widetilde{\bD}_k$ efficiently using optimization tricks commonly employed while training neural networks that include mini-batching, early stopping, and learning rate scheduling. ADAM uses the gradients $\nabla_{\widetilde{D}_{k}}\ell$ to estimate the difference matrices. We estimate the $(i,j)$-th element  $\widetilde{D}_{k}({i,j})$ as~the~following:
\begin{gather*}
\text{initialize } \widetilde{D}^0_{k}=\{diag(\hat{\widetilde{\bS}}_{2k})\}^{-1}-\{diag(\hat{\widetilde{\bS}}_{1k})\}^{-1},\\
g^{i,j}_{k,iter+1} = \frac{1}{M}\sum_{m=1}^{M}\nabla_{\widetilde{D}^{m}_{k}(i,j)}\ell,\ \ \ 
m^{i,j}_{k,iter+1} = \beta_{1}\times m^{i,j}_{k,iter} + (1-\beta_{1}) \times g^{i,j}_{k,iter+1} \\
v^{i,j}_{k,iter+1} = \beta_{2}\times v^{i,j}_{k,iter} + (1-\beta_{2}) \times g^{i,j}_{k,iter+1}.g^{i,j}_{k,iter+1},\ \ \ 
\hat{m}^{i,j}_{k,iter+1} = \frac{m^{i,j}_{k,iter+1}}{1-B_{1}^{iter+1}},\\
\hat{v}^{i,j}_{k,iter+1} = \frac{v^{i,j}_{k,iter+1}}{1-B_{2}^{iter+1}},\ \ \ 
\Delta D_{k}^{iter+1}(i,j) =  \Delta D_{k}^{iter}(i,j) - \frac{\alpha \times \hat{m}^{i,j}_{k,iter+1} }{\delta + \sqrt{\hat{v}^{i,j}_{k,iter+1}}}, \\
 D_{k}^{iter+1}(i,j) =  D_{k}^{iter}(i,j) - \Delta D_{k}^{iter+1}(i,j).
\end{gather*}
Here, $m$, $k$ refer to the momentum and velocity terms in the standard ADAM optimizer.
\subsection{Screening of relevant Fourier frequencies}\label{freqscreen}
Recall the definition of the sets $\Omega^{0}_T$ and $\Omega^\bD_T$ given in \eqref{nset}. By estimating these sets, we propose a data-adaptive method to screen the Fourier frequencies relevant to the underlying classification problem. Define 
$d_k =\|\widetilde{\bD}_k\|_{\mathrm{F}}$ for $k=1,\ldots, T^\prime$.
It follows from the definition that $d_k$ is exactly zero for all $\omega_k\in \Omega_T^0$ and is strictly positive for all $\omega_k\in \Omega^\bD_T$. This allows us to express the sets $\Omega^{0}_T$ and $\Omega^\bD_T$ as
    $\Omega^{0}_T = \{\omega_k: d_{k}=0$, $\omega_k\in \Omega_T\}$,
and $\Omega^\bD_T = \{\omega_k: d_{k}> 0$, $\omega_k\in \Omega_T\}$.
Thus the frequency screening problem is now reduced to a problem identifying strictly positive $d_k$-s. We now present a data-driven procedure to screen the $d_k$s that are significantly large.

Let $d_{(k)}$ denote the $k$-th minimum among $d_1,\ldots, d_{T^\prime}$. Observe that if we arrange the values  $\{d_k:1\leq k\leq T^\prime\}$ in increasing order of magnitude, then the smallest $T_{0}$ values will correspond to the set $\Omega^{0}_T$ and will all be equal to zero. In other words, we have  
    $0 = d_{(1)}=\cdots = d_{(T_0)}<d_{(T_0+1)}\leq \cdots \leq d_{(T^\prime)}$.
This leads to yet another equivalent representation of $\Omega^\bD_T$:
\begin{align*}
    \Omega^\bD_T = \{d_k: d_k\geq d_{(T_0+1)}\text{ for }1\leq k\leq (T^\prime-1)\}.
\end{align*}
The above definition suggests that to estimate $\Omega^\bD_T$, we only need to estimate $T_0$. Another key observation is that the ratio $r_k = d_{(k+1)}/d_{(k)}<\infty$ for all $(T_0+1)\leq k\leq (T^\prime -1)$  whereas~$r_{T_0}=\infty$. 
Let $\hat{d}_k=\|\hat{\widetilde{\bD}}_{k}\|_{\mathrm{F}}$ and 
consider $\hat{d}_{(1)}< \ldots <\hat{d}_{(T^\prime)}$. Since the underlying distributions are absolutely continuous with respect to the Lebesgue measure, the ratios $\hat{r}_k = \hat{d}_{(k+1)}/\hat{d}_{(k)}$ for $k=1,\ldots,(T^\prime-1)$ are well-defined. Since $r_{T_0} = \infty$, we expect the ratio $\hat{r}_{T_0}$ to take a significantly large value when compared to the entire sequence $\{\hat{r}_k:1\leq k\leq (T^\prime -1)\}$. We define
\begin{align}\label{est_partition}
    \hat{T}_{0}&= \argmax\limits_{1\leq k\leq  (T^\prime-1)}\hat{r}_k,\text{ and }\hat{T}_{\bD} = T^\prime - \hat{T}_0,\ \text{and}\ 
    \hat{\Omega}^\bD_T = \{\omega_k: \hat{d}_{k} > \hat{d}_{(\hat{T}_0)},\ k=1,\cdots, T^\prime \}.
\end{align}
\noindent In practice, one may work with $\hat{T}_0 =\argmax_{T_{min}\leq k\leq  T_{max}}\hat{r}_k$, where $T_{min}$ and $T_{max}$ are user defined constants. In variable screening literature, similar criteria based on ratios of ordered values have been discussed in \cite{MR3514508} and \cite{roy2023exact}. Note that the ordering of $\hat{d}_k$-s immediately gives the relative importance of fundamental frequencies in classification.
\subsection{Classification methods}
Recall the classifier $\delta_W$ and the discriminant $\mathcal{L}(\bz)$ defined in \eqref{class_BW} and \eqref{disc_fin}, respectively. The SDMs in $\mathcal{L}(\bz)$ are estimated by smoothed periodogram estimator. In section \ref{method}, we have developed two different methods to estimate the $\bD_k$-s by minimizing (i) the lasso penalized {\it D-trace loss}, and (ii) the sum of lasso penalized {\it D-trace loss} and the cross-entropy loss. In Section \ref{freqscreen} we estimated the set of relevant Fourier frequencies based on the Frobenius norms of estimated $\bD_k$-s. The prior probabilities $\pi_1$ and $\pi_2$ are estimated by $\hat{\pi}_1 = n_1/(n_1+n_2)$, $\hat{\pi}_2 = n_2/(n_1+n_2)$, respectively. Let the solutions to the problem in \eqref{gic} and \eqref{lassonn} be denoted by $\hat{\widetilde{\bD}}_{1k}$ and $\hat{\widetilde{\bD}}_{2k}$, respectively. Also, let $\hat{\Omega}^{\bD_1}_T$ and $\hat{\Omega}^{\bD_2}_T$ denote the estimated set of relevant frequencies based on $\hat{\widetilde{\bD}}_{1k}$ and $\hat{\widetilde{\bD}}_{2k}$, respectively.  Using this estimates in \eqref{est_partition}, we propose two classification rules:
\begin{gather}\label{disc.prop}
\delta_j(\bz)=
       \mathbb{I}[\hat{\mathcal{L}}_j(\bz)>0] +        2\mathbb{I}[\hat{\mathcal{L}}_j(\bz)\leq 0], \nonumber \\
   \text{where }\hat{\mathcal{L}}_j(\bz)=\ \ln {\frac{\hat{\pi}_1}{\hat{\pi}_2}} + \frac{1}{2}\sum\limits_{\omega_k\in \hat{\Omega}^{\bD_j}_{T}}\left [\tilde{z}_k^\top\hat{\widetilde{\bD}}_{jk}\tilde{z}_k+\ln{|\hat{\widetilde{\bD}}_{jk}\hat{\widetilde{\bS}}_{1k} + \bI|} \right ]\text{ for }j=1,2.
\end{gather}
In the next section, we present the theoretical properties of $\hat{\widetilde{\bD}}_{1k}$-s and $\hat{\Omega}^{\bD_1}_T$, i.e., the estimates based on the convex minimization problem in Section \ref{est_Domega}.
\section{Theoretical properties}\label{thprop}
Let $\tilde{S}_k=\{(i,j):\widetilde{\bD}_k(i,j)\neq 0\}$ be the support of $\widetilde{\bD}_k$, and $\tilde{s}_k=|\tilde{S}_k|$. 
Suppose that $\Gamma(\bA,\bB) = (\bA \bigotimes \bB + \bB \bigotimes \bA)/2$ where $\bigotimes$ denotes the Kronecker product between two $2p\times 2p$ matrices $\bA$ and $\bB$. For any two subsets $P_1$ and $P_2$ of $\{1,\ldots, 2p\}\times \{1,\ldots, 2p\}$, we denote by $\Gamma_{P_1P_2}(\bA,\bB)$ the submatrix of $\Gamma(\bA,\bB)$ with rows and columns indexed by $P_1$ and $P_2$, i.e.,  $\Gamma_{P_1P_2}(\bA,\bB) = \frac{1}{2}(A_{j,l}B_{k,m}+A_{k,m}B_{j,l})_{(j,k)\in P_1, (l,m)\in P_2}$. For notational simplicity, we write $\Gamma_k = \Gamma(\widetilde{\bS}_{1k}, \widetilde{\bS}_{2k})=\left (\Gamma_k(i,j)\right )$. We always assume that $\max_k \max (\|\widetilde{\bS}_{1k}\|_\infty, \|\widetilde{\bS}_{2k}\|_\infty)\leq 2M$ and $\max_k \max (\|\widetilde{\bS}_{1k}\|_{1,\infty}, \|\widetilde{\bS}_{2k}\|_{1,\infty})\leq 2M^*$ for some constants $M$ and $M^*$ independent of $p$ and $T$. We define the following quantities:
\begin{gather}\label{alpha_kappa}
    \alpha_k = 1 - \max\limits_{e\in \tilde{S}^c_k}\|\Gamma_{k, e\tilde{S}_k}(\Gamma_{k, \tilde{S}_k\tilde{S}_k})^{-1}\|_1,    \kappa_{\Gamma_k} =\|(\Gamma_{k, \tilde{S}_k\tilde{S}_k})^{-1}\|_{1,\infty},\text{ for }k=1,\ldots, T^\prime,\nonumber \\
\tilde{s}_{\max} =\max_k \tilde{s}_k,\
    \theta_{\eta}(n, p) = (\eta\ln{p}+\ln{4})/n,\ \rho_\eta(n,p) = 1/(1+C_1\theta^{-\frac{1}{2}}_{\eta}(n, p)), \text{ and }\nonumber\\
    \psi_\eta(n,p) = C_2M^2\left (\theta^\frac{1}{2}_\eta(n,p)+C_3\theta_\eta(n,p)\right )\max_k  \tilde{s}_{k}\kappa_{\Gamma_k }(M^2\max_k  \tilde{s}_{k}\kappa_{\Gamma_k }+1).
\end{gather}
Observe that $\tilde{S}_k=\emptyset$ for $\omega_k\in\Omega^0_T$. Therefore, $\Gamma_{k, \tilde{S}_k\tilde{S}_k}$ is essentially an empty matrix with 0 rows and 0 columns for all $\omega_k\in\Omega^0_T$. 
Similarly, the matrix $\Gamma_{k, e\tilde{S}_k}$ is also empty for all $e\in \tilde{S}^c_k$. Thus, $\|\Gamma_{k, e\tilde{S}_k}(\Gamma_{k, \tilde{S}_k\tilde{S}_k})^{-1}\|_1=\|(\Gamma_{k, \tilde{S}_k\tilde{S}_k})^{-1}\|_{1,\infty}=0$ for all $k$ with $\omega_k\in\Omega^0_T$. Consequently, $\alpha_k=1$ and $\kappa_{\Gamma_k} = 0$ for all $k$ with $\omega_k\in\Omega^0_T$. Consider the following assumptions:
\begin{enumerate}
    \item[A1.] There exists a constant $\eta_1>2$ such that (a) $\min_l\min_k\max_j \bS_{lk}(j,j)>\sqrt{2}M\theta_{\eta_1}(n, p)$, (b) $\max_{k}\tilde{s}_k\kappa_{\Gamma_k}<o\left (\theta^{-\frac{1}{2}}_{\eta_1}(n,p)\right )$, and (c) $\min_k\alpha_k>4\max\{\rho_{\eta_1}(n,p), \psi_{\eta_1}(n,p)\}$.
\end{enumerate}
We consider $p$ to be increasing with $n$ and $T$. It follows from the definition of $\theta_\eta(n,p)$ that if there exists a $0<a<1$, such that $\log{p} = n^a$, then $\theta_\eta(n,p)\to 0$ as $n\to\infty$. Therefore, assumption A1.a allows the variance of $X(\omega_k)$ to decrease for $l=1,2$ and $\omega_k\in\Omega^\bD_T$, but at a slower rate than $\theta_\eta(n,p)$. Similarly, A1.b implies that the quantity $\max_{k}\tilde{s}_k\kappa_{\Gamma_k}$ cannot grow arbitrarily. Its rate of growth is governed by $\theta_\eta(n,p)$. Both $\rho_\eta$ and $\psi_\eta$ are decreasing sequences in $n$. A1.c implies that $\max_{e\in \tilde{S}^c_k}\|\Gamma_{k, e\tilde{S}_k}(\Gamma_{k, \tilde{S}_k\tilde{S}_k})^{-1}\|_1<1$ for all $k$ which is the \textit{irrepresentability condition} assumed by \citet{yuan2017differential}. It implies that the elements of $\tilde{S}_k$ and  $\tilde{S}^c_k$ are weakly correlated for all $\omega_k\in\Omega^\bD_T$.
\subsection{Consistent estimation of $\widetilde{\bD}_k$}
We now present the theorem on the convergence of the proposed estimator $\hat{\widetilde{\bD}}_k$ for $k\in\{1,\ldots, T^\prime\}$. Consider the tuning parameter 
\begin{align}\label{lamb_def}
    \lambda_{\eta k} = \max [2MG_{1k}/A_k, \{128(\eta\ln{p} + \ln{4})\}^\frac{1}{2}\widetilde{M}_k G_{2k} + MG_{1k} \widetilde{M}_k]\{128(\eta\ln{p} + \ln{4})\}^\frac{1}{2}
\end{align}
for some $\eta>2$ while minimizing the objective function in \eqref{lassopen}. Here $G_{1k}, G_{2k}, A_k$ and $\widetilde{M}_k$ are constants depending on the quantities in \eqref{alpha_kappa}. Definitions of these constants are detailed in the Appendix. We also use $\bar{\sigma}_k$ in the next theorem which is not defined here due to space constraints.

\begin{theorem}\label{thm1}
    If the assumption A1 is satisfied for $\eta_1>2$, $\min\{n_1,n_2\}> C\max_k\bar{\sigma}_k^{-2}(\eta_1\ln{p} + \ln{4})$ for some $C>0$ and $\widetilde{\bD}_k$ is estimated using $\lambda_{\eta_1 k}$ in \eqref{gic}, then
    \begin{enumerate}
        \item $\mathrm{P}\left [\hat{\tilde{S}}_k\subseteq \tilde{S}_k \text{ for all } \omega_k\in\Omega_T\right ]>1-\frac{2}{p^{\eta_1-2}}$,    
        \item    $\mathrm{P}\left [\max\limits_{\omega_k\in\Omega_T}\|\hat{\widetilde{\bD}}_k - \widetilde{\bD}_k\|_\infty\leq \max_k M_{G_k}\theta_{\eta_1}^\frac{1}{2}(n,p)\right ]>1-\frac{T}{p^{\eta_1-2}},\text{ and}$
        \item $\mathrm{P}\left [\max\limits_{\omega_k\in\Omega_T}\|\hat{\widetilde{\bD}}_k - \widetilde{\bD}_k\|_\mathrm{F}\leq \max_k M_{G_k}\tilde{s}_k^\frac{1}{2}\theta_{\eta_1}^\frac{1}{2}(n,p) \right ]> 1-\frac{T}{p^{\eta_1-2}}$.
    \end{enumerate}
\end{theorem}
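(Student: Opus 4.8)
The plan is to run a primal–dual witness (PDW) argument in the spirit of \citet{yuan2017differential}, adapted to the D-trace objective \eqref{dtraceloss} and to the averaged smoothed-periodogram estimators $\hat{\widetilde{\bS}}_{lk}$. The starting observation is that the gradient of the D-trace loss is affine in $\widetilde{\bD}_k$: writing $\nabla L(\widetilde{\bD}_k) = \tfrac12(\widetilde{\bS}_{1k}\widetilde{\bD}_k\widetilde{\bS}_{2k} + \widetilde{\bS}_{2k}\widetilde{\bD}_k\widetilde{\bS}_{1k}) - (\widetilde{\bS}_{1k}-\widetilde{\bS}_{2k})$ and vectorizing, the stationarity condition becomes the linear system $\Gamma_k\,\mathrm{vec}(\widetilde{\bD}_k) = \mathrm{vec}(\widetilde{\bS}_{1k}-\widetilde{\bS}_{2k})$, which the population parameter $\widetilde{\bD}_k=\widetilde{\bTheta}_{2k}-\widetilde{\bTheta}_{1k}$ satisfies exactly. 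All randomness therefore enters through replacing $(\Gamma_k,\widetilde{\bS}_{1k}-\widetilde{\bS}_{2k})$ by their sample counterparts, so the first step is a deviation bound: on a high-probability event I will show $\max_k\max_l\|\hat{\widetilde{\bS}}_{lk}-\widetilde{\bS}_{lk}\|_\infty \le C M\,\theta_{\eta_1}^{1/2}(n,p)$. Each entry of $\hat{\widetilde{\bS}}_{lk}$ is an average over the $n_l$ in-class subjects of a single-subject smoothed periodogram, i.e.\ a sub-exponential quadratic form in the Gaussian DFT ordinates; a Bernstein bound for the average (with variance proxy $\bar\sigma_k$, whence the sample-size requirement $\min\{n_1,n_2\} > C\max_k\bar\sigma_k^{-2}(\eta_1\ln p+\ln4)$) together with a union bound over the $p^2$ entries yields a per-frequency failure probability of order $p^{-(\eta_1-2)}$.

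Second, I will run the PDW construction frequency by frequency. For each $\omega_k$ I define the restricted oracle estimate $\bar{\bD}_k$ as the minimizer of the penalized loss in \eqref{lassopen} subject to $\mathrm{supp}(\bar{\bD}_k)\subseteq\tilde S_k$; this is well defined and unique because $\kappa_{\Gamma_k}=\|(\Gamma_{k,\tilde S_k\tilde S_k})^{-1}\|_{1,\infty}<\infty$ makes the restricted Hessian invertible. Subtracting the population stationarity relation and writing $\hat\Gamma_k=\Gamma(\hat{\widetilde{\bS}}_{1k},\hat{\widetilde{\bS}}_{2k})=\Gamma_k+E_k$, the restricted system reads $\Gamma_{k,\tilde S_k\tilde S_k}\,\mathrm{vec}(\bar\Delta_{k,\tilde S_k}) = -W_{k,\tilde S_k} - (E_k\,\mathrm{vec}(\bar\Delta_k))_{\tilde S_k} - \lambda_{\eta_1 k}\,z_{k,\tilde S_k}$, where $\bar\Delta_k=\bar{\bD}_k-\widetilde{\bD}_k$ and $W_k$ collects the SDM-deviation noise controlled in the first step. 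Inverting and taking $\ell_\infty$ norms gives the self-consistent bound $\|\bar\Delta_{k,\tilde S_k}\|_\infty \le \kappa_{\Gamma_k}(\|W_k\|_\infty + \|E_k\|_{1,\infty}\|\bar\Delta_{k,\tilde S_k}\|_\infty + \lambda_{\eta_1 k})$; absorbing the feedback $E_k$ term (this is where A1.b, forcing $\max_k\tilde s_k\kappa_{\Gamma_k}=o(\theta_{\eta_1}^{-1/2})$, and the factor $\rho_{\eta_1}(n,p)$ enter) yields $\|\bar\Delta_{k,\tilde S_k}\|_\infty \le M_{G_k}\theta_{\eta_1}^{1/2}(n,p)$ under the choice of $\lambda_{\eta_1 k}$ in \eqref{lamb_def}.

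Third, I will certify strict dual feasibility on $\tilde S_k^c$. Substituting $\bar{\bD}_k$ into the off-support block of the KKT system and solving for the induced subgradient $\hat z_{k,\tilde S_k^c}$ gives $\|\hat z_{k,\tilde S_k^c}\|_\infty \le (1-\alpha_k) + (\text{controlled error})$, where $\alpha_k = 1-\max_{e}\|\Gamma_{k,e\tilde S_k}(\Gamma_{k,\tilde S_k\tilde S_k})^{-1}\|_1$ is the irrepresentability margin; assumption A1.c, which lower-bounds $\min_k\alpha_k$ by $4\max\{\rho_{\eta_1}(n,p),\psi_{\eta_1}(n,p)\}$, forces the right-hand side strictly below $1$. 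By the standard PDW uniqueness lemma this identifies $\hat{\widetilde{\bD}}_k=\bar{\bD}_k$ and hence $\hat{\tilde S}_k\subseteq\tilde S_k$, which is Part 1 (for $\omega_k\in\Omega^0_T$ the support is empty and the construction degenerates trivially, giving $\hat{\widetilde{\bD}}_k=\0$); the GIC rule \eqref{gic} is handled by checking it selects $\lambda$ in the admissible range for which this holds. Part 2 is then the $\ell_\infty$ bound of the second step applied to $\hat{\widetilde{\bD}}_k$, and Part 3 follows from $\|\hat{\widetilde{\bD}}_k-\widetilde{\bD}_k\|_{\mathrm F}\le\tilde s_k^{1/2}\|\hat{\widetilde{\bD}}_k-\widetilde{\bD}_k\|_\infty$, the error being supported on $\tilde S_k$. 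A union bound over the $T^\prime=O(T)$ Fourier frequencies upgrades the per-frequency guarantees to the uniform statements, producing the $T/p^{\eta_1-2}$ failure probabilities.

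The main obstacle I anticipate is the self-consistent control of the quadratic interaction between the Hessian perturbation $E_k$ and the estimation error $\bar\Delta_k$: because the D-trace gradient mixes $\hat{\widetilde{\bS}}_{1k}$ and $\hat{\widetilde{\bS}}_{2k}$ multiplicatively around $\widetilde{\bD}_k$, the noise $W_k$ itself carries a factor proportional to $\tilde s_k$, and keeping both this term and the $\|E_k\|_{1,\infty}\|\bar\Delta_k\|_\infty$ feedback below the irrepresentability budget $\alpha_k$ simultaneously is delicate — it is precisely this balancing that pins down the forms of $\psi_{\eta_1}$, $\rho_{\eta_1}$ and the tuning parameter $\lambda_{\eta_1 k}$. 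A secondary technical point is verifying that the single-subject smoothed periodograms are genuinely sub-exponential with the claimed variance proxy uniformly across frequencies, so that averaging over subjects delivers the $\sqrt{\ln p/n}$ rate rather than a rate degraded by the temporal smoothing bias.
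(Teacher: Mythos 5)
Your proposal is correct and is essentially the paper's own argument: the paper proves each part by invoking Theorem 1 of \citet{yuan2017differential} frequency-by-frequency---whose proof is precisely the primal--dual witness construction you outline, with A1.b absorbing the Hessian-perturbation feedback and A1.c supplying strict dual feasibility---and then applies the same union bound over the $T^\prime$ Fourier frequencies to obtain the $T/p^{\eta_1-2}$ failure probabilities. The only difference is packaging: you unpack the cited theorem rather than quote it (and, to your credit, you explicitly flag the sub-exponential concentration of the averaged smoothed periodograms as a step requiring verification, a point the paper passes over by treating the citation as directly applicable), but the decomposition, the role of each assumption, and the final union bound coincide with the paper's proof.
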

Recall that $\tilde{S}_k=\emptyset$ for all $\omega_k\in \Omega^\bD_T$. Theorem 1.1 implies that the estimated support of $\widetilde{\bD}_k$ lies in the true support for all $\omega_k\in\Omega^\bD_T$ and the estimated support is an empty set for all $\omega_k\in\Omega^0_T$, with high probability. Theorem 1.2-1.3 shows the rates of convergence of the estimated $\widetilde{\bD}_k$ for all $\omega_k$.
\subsection{Consistent screening of frequencies}
    \begin{enumerate}
        \item[A2.] There exists $\eta_2\geq \eta$ such that $\min_{j,l\in\tilde{S}_k}|\widetilde{\bD}_k(j,l)|> 2M_{G_k}\theta_{\eta_2}^\frac{1}{2}(n,p)\ \text{ for all } \omega_k\in \Omega ^{\bD}_T$.
    \end{enumerate}
    Define $q_\eta(T, n,p)= 2\max_k M_{G_k}\tilde{s}_k^\frac{1}{2}\theta^\frac{1}{2}_\eta(n,p)$. It readily follows from assumption A2 that
    \begin{align*}
        &\ d_k >2M_{G_k}\tilde{s}^\frac{1}{2}_k\theta_{\eta_1}^\frac{1}{2}(n,p)=q_{\eta_1}(T, n,p)\ \text{ for all } \omega_k\in \Omega ^{\bD}_T,\ \   \text{i.e., } d_{(T_0+1)}> 2q_{\eta_1}(T, n,p).
    \end{align*}
    \begin{enumerate}
        \item[A3.] $\max_{(T_0+1)\leq k\leq T^\prime-1} r^{d}_k <O(d_{(T_0+1)}/q_{\eta_1}(T, n,p))$.
    \end{enumerate}

Assumption A3 implies that the differences between inverse SDMs cannot increase arbitrarily. 
\begin{theorem}\label{thm4}
    If the assumptions A1-A3 are satisfied with $\eta_1,\eta_2>2$, then 
    \[\mathrm{P}\left[ \Omega^\bD_T\subseteq \hat{\Omega}^\bD_T\right ]> 1 - \frac{T}{p^{\eta-2}}\text{ for all }\eta > \max\{\eta_1,\eta_2\}.\]
\end{theorem}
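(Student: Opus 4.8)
The plan is to reduce the sure-screening statement to the Frobenius-norm bound of Theorem~\ref{thm1} together with the deterministic separation and ratio-control assumptions A2--A3. First I would work on the event $\mathcal{E}$ on which $\max_{\omega_k\in\Omega_T}\|\hat{\widetilde{\bD}}_k-\widetilde{\bD}_k\|_\mathrm{F}\le\tfrac12 q_{\eta_1}(T,n,p)$; by the third conclusion of Theorem~\ref{thm1} this has probability exceeding $1-T/p^{\eta_1-2}$. On $\mathcal{E}$ the reverse triangle inequality for the Frobenius norm gives $|\hat d_k-d_k|\le\|\hat{\widetilde{\bD}}_k-\widetilde{\bD}_k\|_\mathrm{F}\le\tfrac12 q_{\eta_1}$ for every $k$. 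Since $d_k=0$ for $\omega_k\in\Omega^0_T$ and, by A2 together with its displayed consequence $d_{(T_0+1)}>2q_{\eta_1}$, $d_k>2q_{\eta_1}$ for $\omega_k\in\Omega^\bD_T$, the estimated norms split into two well-separated blocks: $\hat d_k\le\tfrac12 q_{\eta_1}$ on the noise set and $\hat d_k>\tfrac32 q_{\eta_1}$ on the signal set. As there are exactly $T_0$ noise and $T^\prime-T_0$ signal frequencies, the order statistics obey $\hat d_{(T_0)}\le\tfrac12 q_{\eta_1}<\tfrac32 q_{\eta_1}\le\hat d_{(T_0+1)}$. (The support result, part~1 of Theorem~\ref{thm1}, in fact forces $\hat d_k=0$ on the noise block, which only sharpens the separation.)

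The crux is to show the maximizing index $\hat T_0=\argmax_k\hat r_k$ satisfies $\hat T_0\le T_0$, so that the threshold $\hat d_{(\hat T_0)}$ falls inside the noise block. For this I would prove that the ``gap'' ratio dominates every within-signal ratio, i.e. $\hat r_{T_0}>\max_{T_0<k\le T^\prime-1}\hat r_k$. Bounding below, $\hat r_{T_0}=\hat d_{(T_0+1)}/\hat d_{(T_0)}\ge(d_{(T_0+1)}-\tfrac12 q_{\eta_1})/(\tfrac12 q_{\eta_1})=2d_{(T_0+1)}/q_{\eta_1}-1$ (or $+\infty$ if $\hat d_{(T_0)}=0$). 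Bounding above, for $k>T_0$ both $\hat d_{(k)}$ and $\hat d_{(k+1)}$ are signal values with $d_{(k)},d_{(k+1)}>2q_{\eta_1}$, so $d_{(k)}-\tfrac12 q_{\eta_1}>\tfrac34 d_{(k)}$ and $d_{(k+1)}+\tfrac12 q_{\eta_1}<\tfrac54 d_{(k+1)}$, whence $\hat r_k\le\tfrac53(d_{(k+1)}/d_{(k)})=\tfrac53 r^d_k$. Assumption A3, which bounds $\max_{k>T_0}r^d_k$ by a quantity of order $d_{(T_0+1)}/q_{\eta_1}$ with a sufficiently small implied constant, then yields $\max_{k>T_0}\hat r_k<2d_{(T_0+1)}/q_{\eta_1}-1\le\hat r_{T_0}$. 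Consequently $\max_{k\le T_0}\hat r_k\ge\hat r_{T_0}>\max_{k>T_0}\hat r_k$, so $\hat T_0\le T_0$; any large within-noise ratios only reinforce this, since they too sit at indices $\le T_0$ and cannot push the maximizer above $T_0$.

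Finally I would close the argument. Because $\hat T_0\le T_0$ we have $\hat d_{(\hat T_0)}\le\hat d_{(T_0)}\le\tfrac12 q_{\eta_1}$, while every signal frequency satisfies $\hat d_k>\tfrac32 q_{\eta_1}>\hat d_{(\hat T_0)}$ and is therefore retained in $\hat\Omega^\bD_T=\{\omega_k:\hat d_k>\hat d_{(\hat T_0)}\}$. Hence $\Omega^\bD_T\subseteq\hat\Omega^\bD_T$ on $\mathcal{E}$, and the probability bound is inherited from Theorem~\ref{thm1}. Replacing $\eta_1$ by a generic $\eta>\max\{\eta_1,\eta_2\}$, with the tuning parameter $\lambda_{\eta k}$ chosen at level $\eta$ so that the Frobenius bound holds with error $\tfrac12 q_\eta$ and A2 still provides the separation at level $\theta_\eta(n,p)$, yields the stated $1-T/p^{\eta-2}$.

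The step I expect to be the main obstacle is the ratio comparison in the second paragraph. Transferring A3's bound on the population ratios $r^d_k$ to the empirical ratios $\hat r_k$ is only safe when the signal margin is large enough that the multiplicative distortion (the factor $\tfrac53$ above, itself controlled by $d_{(k)}>2q_{\eta_1}$) does not swamp the gap ratio; the implied constant in A3 must therefore be tracked carefully against the constant arising from the error level $q_{\eta_1}$, and this accounting must remain consistent under the $\eta$-bookkeeping linking A1 (through Theorem~\ref{thm1}), A2 and A3. Controlling the within-noise ratios, by contrast, is a non-issue for this one-sided inclusion, precisely because they can never force $\hat T_0>T_0$.
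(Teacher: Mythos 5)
Your proof is correct and follows essentially the same route as the paper: the paper likewise combines the Frobenius-norm bound of Theorem~\ref{thm1} (its Lemma on concentration of the $\hat{d}_k$ and their order statistics), the separation $d_{(T_0+1)}>2q_{\eta_1}$ from A2, and the ratio-domination argument from A3 (its Lemma showing $\hat{r}_{T_0}$ exceeds every within-signal ratio, hence $\hat{T}_0\leq T_0$), and then concludes that no signal frequency can fall below the threshold. The only differences are organizational --- you argue deterministically on the good event and use direct block counting for the order statistics, where the paper uses a union bound over $\omega_k\in\Omega^\bD_T$ plus a pigeon-hole lemma, and your explicit constants ($q/2$, $5/3$) replace the paper's constant ($1/3$) hidden in its version of A3 --- none of which changes the substance of the argument.
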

Theorem \ref{thm4} shows that the estimated set of frequencies contains the true set with high probability. Therefore, if $T = o(p^{\eta -2})$, then the proposed screening method possesses {\it sure screening property}. 

\section{Simulation study}\label{simstud}
In this section, we compare the performance of the proposed classifiers, \(\delta_1\) and \(\delta_2\), with several popular high-dimensional classifiers: linear discriminant classifier (LDA), quadratic discriminant classifier (SQDA), 1-nearest neighbor (1NN) with dynamic time warping \citep[DTW,][]{JSSv099i09}, and the state-of-the-art MiniROCKET. LDA and QDA were implemented using scikit-learn package in python \cite{scikit-learn}. SVD solver was used for LDA and $\lambda$ $\in$ $[0,1]$. Minirocket was implemented with default 10,000 kernels using sktime \cite{loning2019sktime} package followed by Ridge Classifier with cross validation to tune regularization parameter $\lambda$ $\in$ $[-3,3]$.For ADAM we use $lr = 1e-3$ ,$\lambda$ $\in$ $[1,1e-10]$.Hyper-parameters for all models were tuned using 2 fold CV. All experiments were performed on CPU on Intel Core i5. We also evaluate the two estimation procedures from Sections \ref{est_Domega} and \ref{estnn}, the accuracy of the frequency screening method.
Let $\tilde{X}(\omega_k)\mid Y=l\sim N_{2p}(\0, \widetilde{\bS}_{lk})$ for $\omega_k=0,1/T,\ldots, [(T-1/2)]/T$, and $X(t)$ be the inverse Fourier transformation based on the DFTs ${X}(\omega_k)=\tilde{X}_{1:p}(\omega_k)+\mathrm{i}\tilde{X}_{p+(1:p)}(\omega_k)$. The matrix $\widetilde{\bTheta}_{lk}=\widetilde{\bS}^{-1}_{lk}$ has the structure as introduced in \ref{method}, i.e., column and row augmentation of the following symmetric and skew-symmetric matrices: 
\begin{gather*}
 \widetilde{\bTheta}^\mathcal{R}_{1k}(i,j) = 0.5^{|i-j|},\ \    \widetilde{\bTheta}^\mathcal{I}_{1k}(i,j) = 0.5^{|i-j|} * \{-1\}^{\mathbb{I}[j>i]}-\mathbb{I}[j=i],\\
 {\bTheta}_{2k}(i,j)={\bTheta}_{1k}(i,j)  \{-1\}^{\mathbb{I}[|j-i|=1]}\text{ with }  \bD_{k}(i,j) = \mathbb{I}[|i-j|=1],
\end{gather*}
if $\omega_k\in\{1/T, \ldots, 20/T\}=\Omega^\bD_T$. For the remaining frequencies, \(\bTheta_{1k} = \bTheta_{2k}\). Thus, differences between the two populations occur only at frequencies in \(\Omega^\bD_T\). Note that \(\bD_{k}\) is sparse on \(\Omega^\bD_T\) and null on \(\Omega^0_T\). We consider two examples with these settings. In \textbf{Example 1}, we have \(n_1 = n_2 = 100\), \(p = 32\), and \(T = 200\). In \textbf{Example 2}, we increase the dimension to \(p = 64\), keeping the other parameters the same. The test set consists of 200 observations, equally divided between both classes.
\begin{table}[H]
\renewcommand{\arraystretch}{1.35}
    \small
    \centering
    \caption{Comparison of estimated misclassification probability of the proposed classifiers with traditional and benchmark methods based on twenty simulation runs (standard errors in italics) }
    \begin{tabular}{|c|c|c|c|c|c|c|c|c|}
    \hline
         Ex & $(n,p,T)$ & LDA & QDA & 1NN & 1NN & Mini  & $\delta_1$ & $\delta_2$\\
         & & & &DTW-$l_1$ & DTW-$l_2$ &ROCKET & &\\
         \hline
         1 & $(200,32,200)$ & 0.488 & 0.467  & 0.481 & 0.502 & 0.116 & 0.000 &0.022 \\
         & & {\it 0.027}& {\it 0.016}  & {\it 0.001} & {\it 0.005}  & {\it 0.021} & {\it 0.000}&{\it 0.007}\\
         \hline
         2 & $(200,64,200)$ &0.496&0.507 & 0.501& 0.503 &0.044 & 0.000 & 0.000\\
          & &{\it 0.046} & {\it 0.015} &{\it 0.046}& {\it 0.005 }& 0.021&{\it 0.000} & {\it 0.000}\\
         \hline
    \end{tabular}
    \label{tab1Sim}
\end{table}
Table \ref{tab1Sim} shows that the traditional classifier LDA fails, since the populations have no difference in their locations. Both QDA and 1NN-DTW fail due to the \textit{curse of dimensionality}. The state-of-the-art MiniROCKET yields high accuracy, but it does not have interpretability. The proposed classifiers not only outperform the rest, but they also extract meaningful features from the series. The true positive, true negative, and true discovery rates in the recovery of the support $\cup_{\omega_k\in \Omega^\bD_T}\tilde{S}_k$ and of the set $\Omega^\bD_T$ are given by  $(\text{TP}_\bD, \text{TN}_\bD, \text{TD}_\bD)$ and $( \text{TP}_\omega, \text{TN}_\omega, \text{TD}_\omega)$, respectively. Table \ref{screenaccur} shows that the screening method has retained the relevant $\omega_k$ with high accuracy in both examples. The TP index in the estimated $\tilde{S}_k$ is above $91\%$. Using Dtrace and the $\ell$ loss yield similar accuracy.
\begin{gather*}
\text{TP}_\bD=\frac{\sum_{k,i,j}|\tilde{S}_k|\mathbb{I}[\hat{\widetilde{\bD}}_k(i,j)\neq 0,{\widetilde{\bD}}_k(i,j)\neq 0]}{\sum_{k=1}^{T^\prime}|\tilde{S}_k|},\ \text{TN}_\bD = \frac{\sum_{k,i,j}|\tilde{S}^c_k|\mathbb{I}[\hat{\widetilde{\bD}}_k(i,j)= 0,{\widetilde{\bD}}_k(i,j)= 0]}{\sum_{k=1}^{T^\prime}|\tilde{S}^c_k|},\\
    \text{TD}_\bD = \frac{\sum_{k,i,j}|\hat{\tilde{S}}_k|\mathbb{I}[\hat{\widetilde{\bD}}_k(i,j)\neq 0,{\widetilde{\bD}}_k(i,j)\neq 0]}{\sum_{k=1}^{T^\prime}|\hat{\tilde{S}}_k|},
\end{gather*}
\begin{gather*}
    \text{TP}_\omega=\frac{|\hat{\Omega}^\bD_T\cap \Omega^\bD_T|}{|\Omega^\bD_T|},\hspace{0.5cm} \text{TN}_\omega=\frac{|\hat{\Omega}^0_T\cap \Omega^0_T|}{|\Omega^0_T|} ,\hspace{0.5cm} \text{TD}_\omega=\frac{|\hat{\Omega}^\bD_T\cap \Omega^\bD_T|}{|\hat{\Omega}^\bD_T|}.
\end{gather*}
\begin{table}[H]
\renewcommand{\arraystretch}{1.35}
\small
\caption{Performance of the frequency screening method and recovery of support of $\bD_k$-s in simulated examples based on five iterations (standard errors in italics).}
    \centering
    \begin{tabular}{|c|c|c|c|c|c|c|c|c|c|c|c|c|c|}
    \hline
    &\multicolumn{6}{c|}{Frequency screening}&\multicolumn{6}{c|}{Support recovery }\\
    \hline
       Ex  &\multicolumn{3}{c|}{Dtrace}&\multicolumn{3}{c|}{Dtrace + NLL}&\multicolumn{3}{c|}{Dtrace}&\multicolumn{3}{c|}{Dtrace + NLL}\\
       \hline
       & TP$_\omega$ & TN$_\omega$ & TD$_\omega$& TP$_\bD$ & TN$_\bD$ & TD$_\bD$& TP$_\omega$ & TN$_\omega$ & TD$_\omega$& TP$_\bD$ & TN$_\bD$ & TD$_\bD$\\
         \hline
         1 &1.000 & 0.987 & 0.952& 1.000 & 0.987 & 0.952&0.911 &0.402 &0.089& 0.913 & 0.401 & 0.089\\
         &{\it 0.000} & {\it 0.000}& {\it 0.000}&{\it 0.000} & {\it 0.000}& {\it 0.000}&  {\it 0.002} & {\it 0.001} & {\it 0.000} &{\it 0.031} & {\it 0.031}& {\it 0.000}\\
         \hline
         2 & 0.950 & 0.988 & 0.905&0.950 & 0.988 & 0.905&0.911 &0.407 &0.046& 0.913 &0.407 &0.047\\
         &{\it 0.000} & {\it 0.000}& {\it 0.000}&{\it 0.000} & {\it 0.000}& {\it 0.000}&{\it 0.002}&  {\it 0.000} & {\it 0.000} &{\it 0.002}&  {\it 0.001} & {\it 0.000}\\
         \hline
    \end{tabular}
    \label{screenaccur}
\end{table}
\begin{figure}[H]
    \centering
         \includegraphics[scale = 0.485]{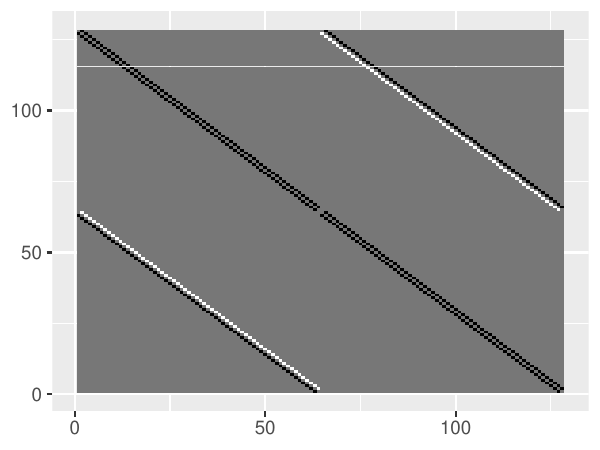}\ \includegraphics[scale = 0.485]{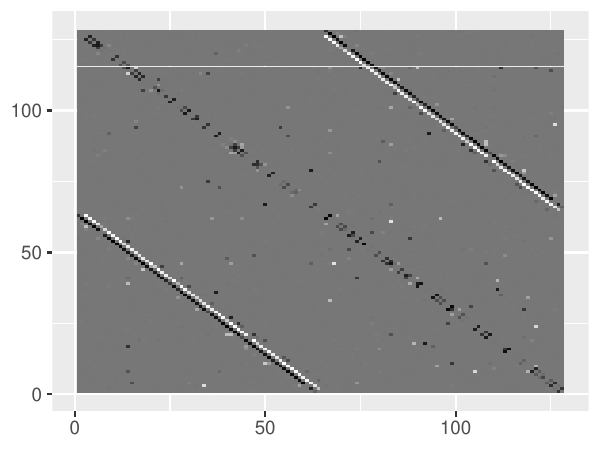}\  \includegraphics[width = 0.35\textwidth]{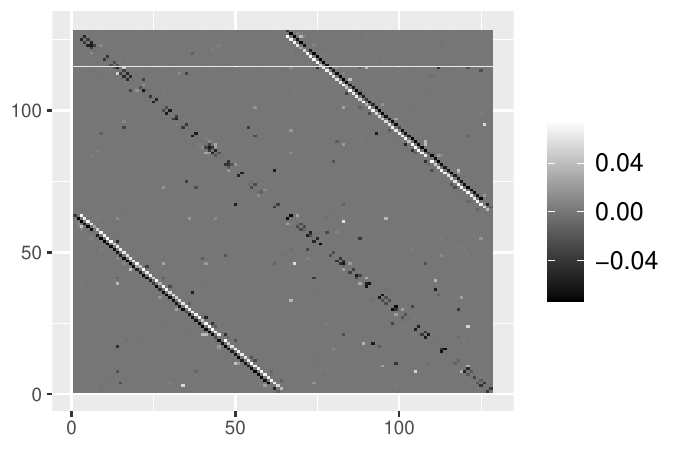}
    \caption{The left panel shows shows the true sparse structure of $\widetilde{\bD}_1$ in Example 2. The middle and the right panels show the structures estimated using Dtrace and $\ell$, respectively (based on the first iteration).}
    \label{sparsestrucEx2}
\end{figure}
\section{Real data analysis}\label{real}
We evaluate our proposed methodology on the EEG drowsy-alert dataset \citep[][]{cao2019multi}, acquired using a virtual reality driving simulator. This dataset includes EEG data from 27 subjects (aged 22-28) recorded at 32 channels at 500Hz, with a total of 1872 epochs of 3.2 seconds from 10 subjects, down-sampled to 128Hz. Baseline alertness was defined as the 5th percentile of local reaction time (RT) after sudden events. Trials with RT lower than 1.5 times the baseline were labeled `alert/normal,' and those higher than 2.5 times were labeled `drowsy.' Moderate RT trials were excluded. Each epoch is a 30-dimensional time series of length 384. We randomly selected $50\%$ of the trials for the training set, maintaining equal class proportions. The remaining data was split equally into validation and test sets.
\begin{figure}[H]
    \centering
    \begin{subfigure}[t]{0.5\textwidth}
        \centering
        \caption{$\sum_{\omega_k\in \hat{\Omega}_T\cap \alpha-\text{band}}|\bD(\omega_k)|$}
        \includegraphics[height = 2.10in, width = 0.99\textwidth]{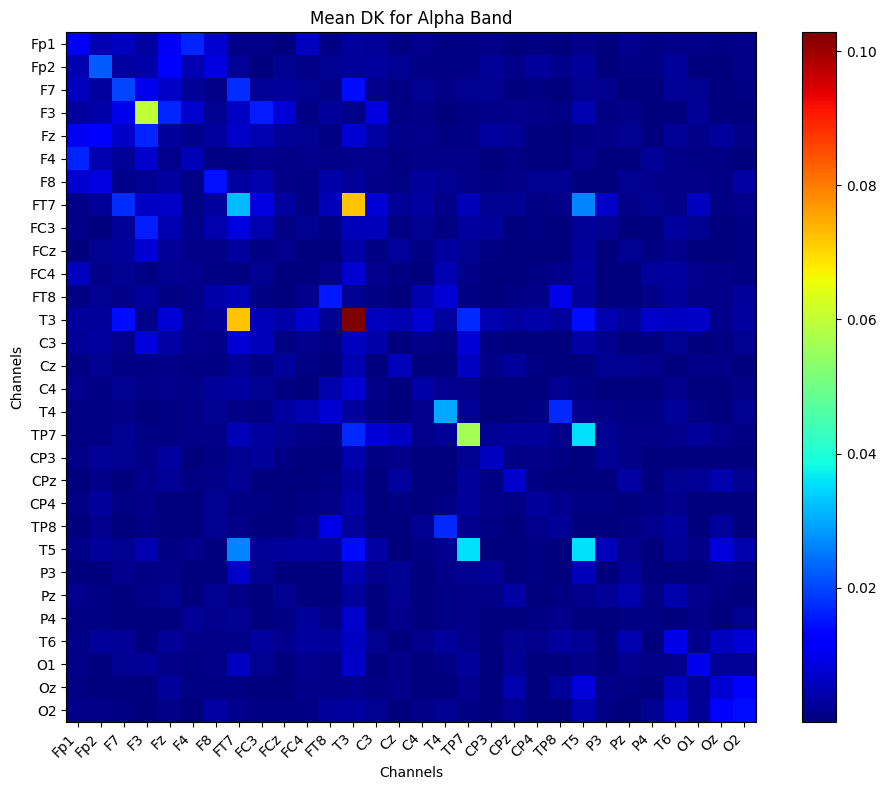}
    \end{subfigure}%
    ~ 
    \begin{subfigure}[t]{0.5\textwidth}
        \centering
        \caption{$\sum_{\omega_k\in \hat{\Omega}_T\cap \beta-\text{band}}|\bD(\omega_k)|$}
        \includegraphics[height=2.10in, width = 0.99\textwidth]{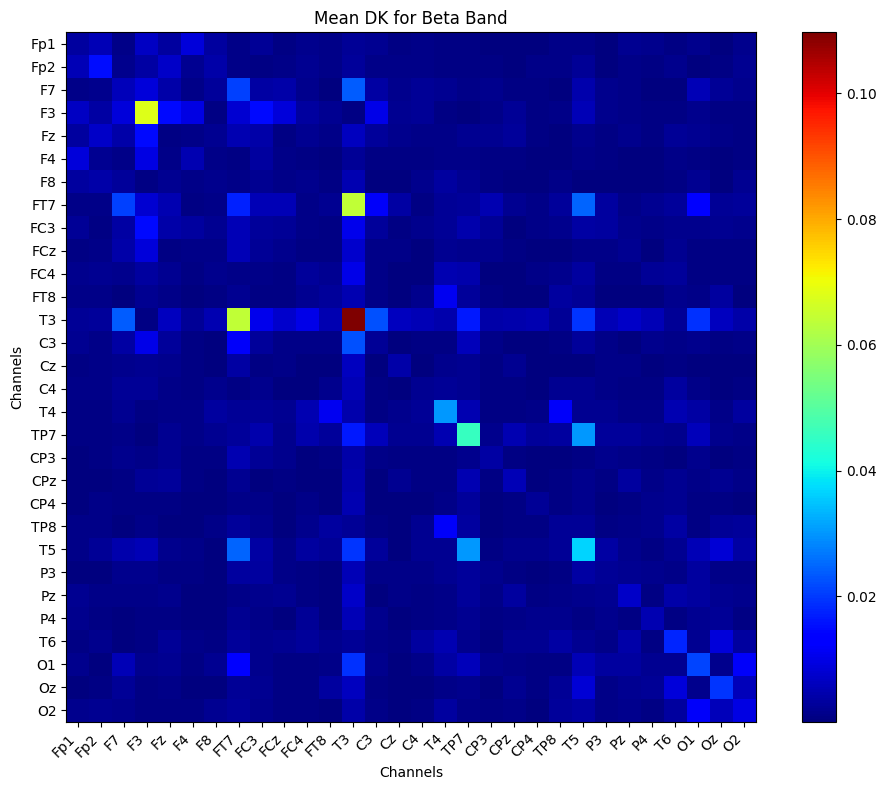}
    \end{subfigure}\\
    \begin{subfigure}[t]{0.5\textwidth}
        \centering
        \caption{$\sum_{\omega_k\in \hat{\Omega}_T\cap \gamma-\text{band}}|\bD(\omega_k)|$}
        \includegraphics[height=2.10in, width = 0.99\textwidth]{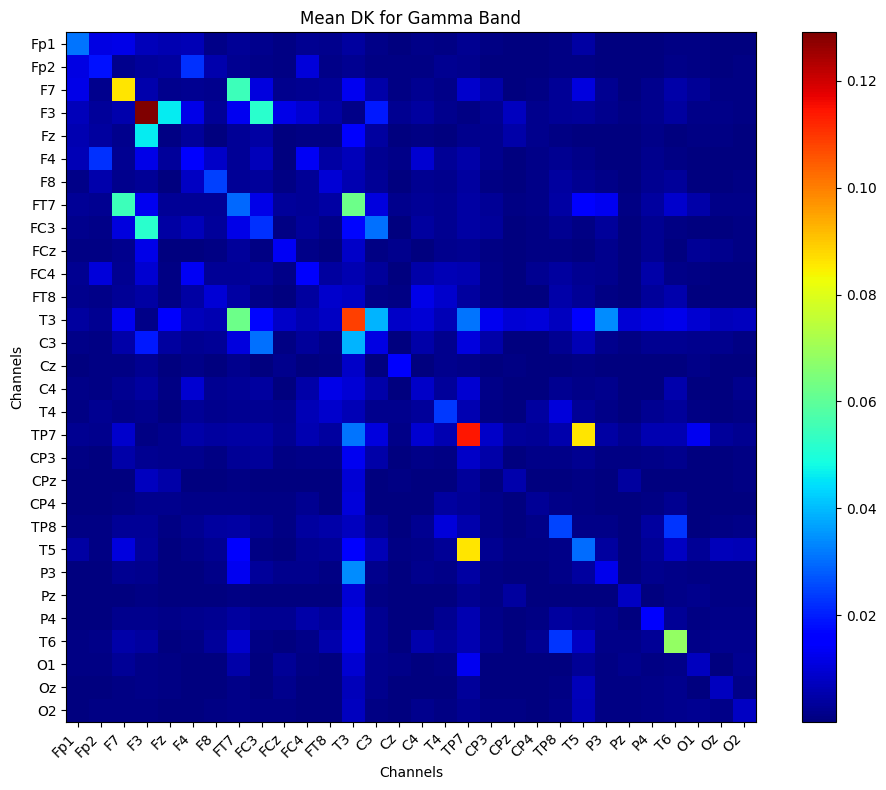}
    \end{subfigure}%
    ~ 
    \begin{subfigure}[t]{0.5\textwidth}
        \centering
        \caption{$\sum_{\omega_k\in \hat{\Omega}_T\cap \delta-\text{band}}|\bD(\omega_k)|$}
        \includegraphics[height=2.10in, width = 0.99\textwidth]{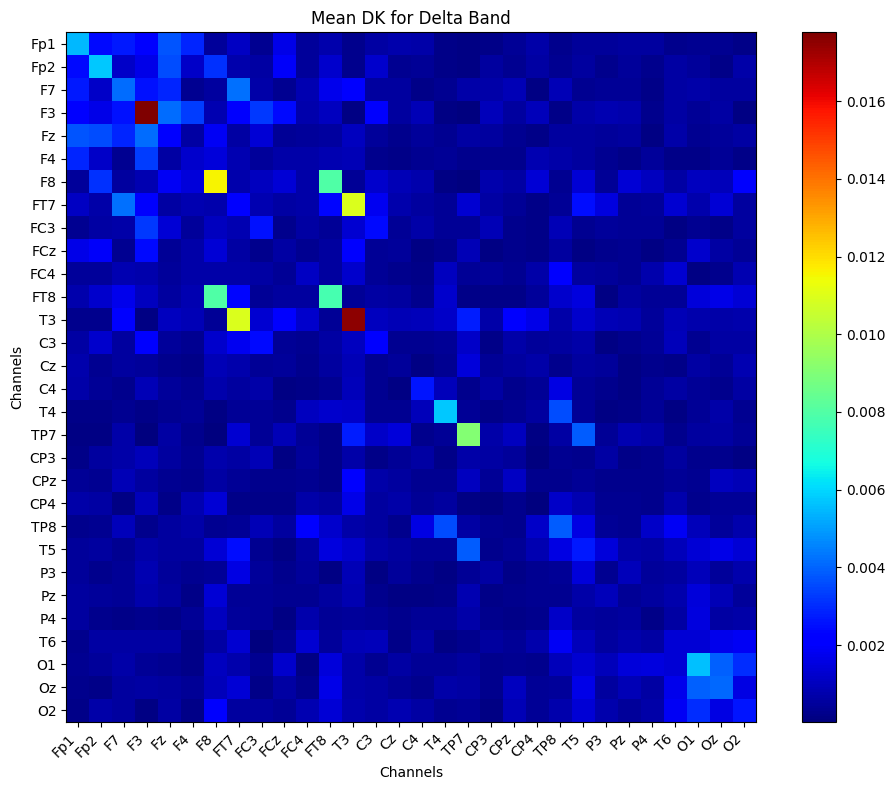}
    \end{subfigure}\\
    \begin{subfigure}[t]{0.5\textwidth}
        \centering
        \caption{$\sum_{\omega_k\in \hat{\Omega}_T\cap \theta-\text{band}}|\bD(\omega_k)|$}
        \includegraphics[height=2.10in, width = 0.99\textwidth]{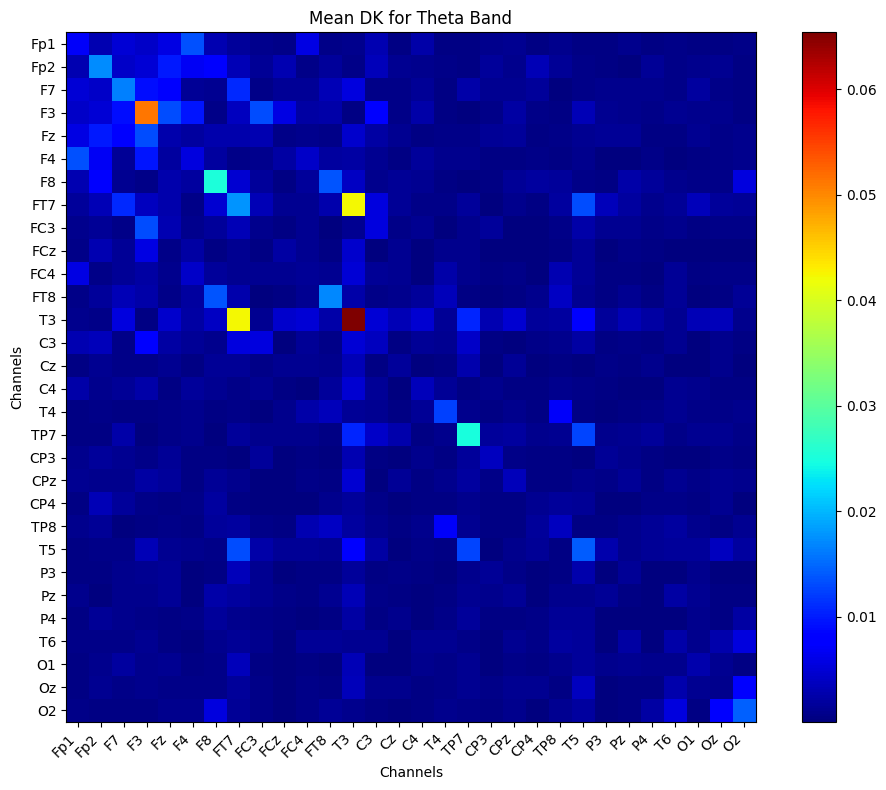}
    \end{subfigure}%
    ~ 
    \begin{subfigure}[t]{0.5\textwidth}
        \centering
        \caption{$|\boldsymbol{\Gamma}_1(0) - \boldsymbol{\Gamma}_2(0)|$}
        \includegraphics[height=2.10in, width = 0.99\textwidth]{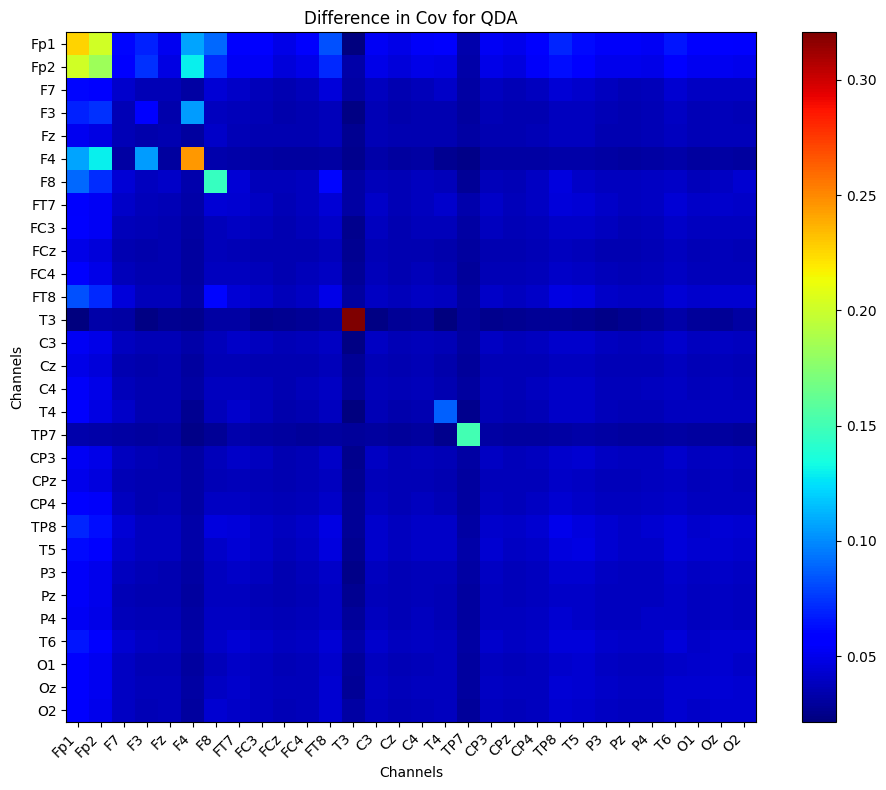}
    \end{subfigure}
    \caption{(e)-(d)--Clockwise from bottom-left -- the interactions between channels for frequency bands $\theta,\gamma,\alpha,\beta$ and $\delta$ are displayed using heatmaps. (f) The difference between auto-covariance matrices $\boldsymbol{\Gamma}_1(0)$ and $\boldsymbol{\Gamma}_2(0)$. $|\bM|$ denotes elements wise absolute value of matrix $\bM$.}
    \label{ad_plot}
\end{figure}
We observed no significant difference in the mean of the two classes. This explains the poor performance of the linear classifier LDA. In Figure \ref{ad_plot}(f), we plot the difference between $30 \times 30$ auro-covariance matrices of class 1 and 2 at lag 0. It suggests the presence of discriminatory information in the interaction of channels, particularly in frontal and temporal regions. We observed a similar pattern of differences (but weaker in magnitude) in auto-covariance matrices at higher lags (omitted from the global response due to space constraints). This difference in connectivity patterns explains the performance of QDA. The proposed method extracts these to better explain changes in interaction patterns between channels (see Figure \ref{ad_plot}(a)-(e)). For example, we observe significant differences in the pattern of interaction between the Frontal lobe channels for the Gamma band which is associated with an elevated state of vigilance and cognitive activity.

\begin{table}
\renewcommand{\arraystretch}{1.35}
\small
    \centering
    \caption{Comparison of estimated misclassification probability of the proposed classifiers with traditional and benchmark methods based on twenty iterations (standard errors in italics) }
    \begin{tabular}{|c|c|c|c|c|c|c|c|c|c|}
    \hline
          $(n,p,T)$ & LDA & QDA & 1NN & 1NN &Mini & $\delta_1$ & $\delta_2$\\
          &  &  &DTW-$l_1$ & DTW-$l_2$ &ROCKET & &\\
         \hline
          $(1872,30,384)$ &0.492 & 0.296 & 0.478 & 0.473 & 0.155 & 0.269 & 0.258\\
          &{\it 0.008} &{\it 0.036} & {\it 0.002} & {\it 0.001} & {\it 0.012} & {\it 0.020} & {\it 0.026}\\
         \hline
    \end{tabular}
    \label{tab1Real}
\end{table}
\section{Concluding remarks}\label{limit}
In this article, we present a statistical method to classify high-dimensional time series in the spectral domain, with an emphasis on parameter interpretability. We establish the consistency of the proposed estimators of the model parameters. The classifier assumes the time series are Gaussian and stationary. Future research will extend this framework to accommodate non-Gaussian distributions, such as elliptical densities. The consistency of our proposed classifier hinges on the difference between Gaussian and Whittle likelihoods. Investigating this difference in high dimensions remains an open problem and is beyond the scope of this article. Additionally, employing a hierarchical group lasso penalty in the loss function could better manage varying degrees of sparsity. Another research direction is developing models for block-stationary or locally stationary time series.

{
\bibliographystyle{apalike}
\bibliography{reference}
}

\newpage
\appendix
\section{Appendix}\label{appen}
\subsection{Notations} \label{Nota_C}

Throughout this article, we have used the following definitions to present the mathematical results and in related discussions.
\begin{enumerate}
	\item $a_n=o(b_n)$ implies that for every $\epsilon>0$ there exists an $N\in\mathbb{N}$ such that $|a_n/b_n|<\epsilon$ for all $n\geq N$.
	\item $a_n=O(b_n)$ implies that there exist $M>0$ and $N\in\mathbb{N}$ such that $|a_n/b_n|<M$ for all $n\geq N$.
	\item $X_n=o_{\rm P}(a_n)$ implies that the sequence of random variables $X_n/a_n$ converges to 0 in probability as $n\to\infty$. 
	\item $X_n=O_{\rm P}(a_n)$ implies that the sequence of random variables $X_n/a_n$ is stochastically bounded, i.e., for every $\epsilon>0$ there exist a finite $M>0$ and $N\in\mathbb{N}$ such that ${\rm P}[|X_n/a_n|\ge M]<\epsilon$ for all $n\geq N$.
\end{enumerate}

\subsection{Proofs and Mathematical Details} \label{Appendix_A}
\allowdisplaybreaks
\begin{lemma}\label{normrelations}
    $\|\hat{\bD}_k - \bD_k\|_\infty\leq \sqrt{2}\|\hat{\widetilde{\bD}}_k - \widetilde{\bD}_k\|_\infty$ and $\|\hat{\bD}_k - \bD_k\|_\mathrm{F}= \|\hat{\widetilde{\bD}}_k - \widetilde{\bD}_k\|_\mathrm{F}/\sqrt{2}$ for all $k=1,\ldots, T^\prime$.
\end{lemma}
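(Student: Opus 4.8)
The plan is to reduce both claims to entrywise comparisons after exploiting the $\mathbb{R}$-linearity of the map $\bM\mapsto\widetilde{\bM}$. Writing the complex error as $\bDelta_k := \hat{\bD}_k - \bD_k = \bDelta_k^{\mathcal{R}} + \mathrm{i}\,\bDelta_k^{\mathcal{I}}$, linearity of tilding together with the convention $\hat{\widetilde{\bD}}_k = \widetilde{\hat{\bD}}_k$ (hatting and tilding commute, exactly as in $\hat{\widetilde{\bS}}_{lk} = \widetilde{\hat{\bS}}_{lk}$) gives $\hat{\widetilde{\bD}}_k - \widetilde{\bD}_k = \widetilde{\bDelta}_k = \bigl[\begin{smallmatrix}\bDelta_k^{\mathcal{R}} & \bDelta_k^{\mathcal{I}}\\ -\bDelta_k^{\mathcal{I}} & \bDelta_k^{\mathcal{R}}\end{smallmatrix}\bigr]$. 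The whole argument is then bookkeeping: every entry of $\widetilde{\bDelta}_k$ equals $\pm\bDelta_k^{\mathcal{R}}(i,j)$ or $\pm\bDelta_k^{\mathcal{I}}(i,j)$ for some $(i,j)$, and conversely each $\bDelta_k^{\mathcal{R}}(i,j)$ and $\bDelta_k^{\mathcal{I}}(i,j)$ appears, up to sign, as an entry of $\widetilde{\bDelta}_k$.

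For the $\ell_\infty$ bound I would use that $\|\bDelta_k\|_\infty = \max_{i,j}\sqrt{\bDelta_k^{\mathcal{R}}(i,j)^2 + \bDelta_k^{\mathcal{I}}(i,j)^2}$ (complex modulus of the entries), while $\|\widetilde{\bDelta}_k\|_\infty = \max_{i,j}\max\{|\bDelta_k^{\mathcal{R}}(i,j)|, |\bDelta_k^{\mathcal{I}}(i,j)|\}$ by the entry description above. Applying the elementary inequality $\sqrt{a^2+b^2}\leq\sqrt{2}\,\max\{|a|,|b|\}$ with $a=\bDelta_k^{\mathcal{R}}(i,j)$ and $b=\bDelta_k^{\mathcal{I}}(i,j)$, each term on the left is at most $\sqrt{2}\,\|\widetilde{\bDelta}_k\|_\infty$; taking the maximum over $(i,j)$ yields $\|\bDelta_k\|_\infty\leq\sqrt{2}\,\|\widetilde{\bDelta}_k\|_\infty$.

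For the Frobenius identity I would sum squares block by block: the four blocks $\bDelta_k^{\mathcal{R}},\bDelta_k^{\mathcal{I}},-\bDelta_k^{\mathcal{I}},\bDelta_k^{\mathcal{R}}$ have squared Frobenius norms summing to $2\bigl(\|\bDelta_k^{\mathcal{R}}\|_\mathrm{F}^2 + \|\bDelta_k^{\mathcal{I}}\|_\mathrm{F}^2\bigr)$, whereas $\|\bDelta_k\|_\mathrm{F}^2 = \sum_{i,j}|\bDelta_k(i,j)|^2 = \|\bDelta_k^{\mathcal{R}}\|_\mathrm{F}^2 + \|\bDelta_k^{\mathcal{I}}\|_\mathrm{F}^2$. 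Hence $\|\widetilde{\bDelta}_k\|_\mathrm{F}^2 = 2\|\bDelta_k\|_\mathrm{F}^2$, i.e.\ $\|\bDelta_k\|_\mathrm{F} = \|\widetilde{\bDelta}_k\|_\mathrm{F}/\sqrt{2}$. Nothing in these computations depends on $k$ beyond the fixed index, so the statement holds for all $k=1,\ldots,T^\prime$. There is no genuine obstacle here: the only points worth stating explicitly are the commuting of hat and tilde and the complex-modulus interpretation of the entrywise norms, while everything else is the elementary inequality and a direct block computation.
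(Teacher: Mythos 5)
Your proposal is correct and follows essentially the same route as the paper's proof: both reduce to entrywise comparisons, apply the elementary inequality $\sqrt{a^2+b^2}\leq\sqrt{2}\max\{|a|,|b|\}$ for the sup-norm bound, and use the fact that the real and imaginary parts each appear twice (up to sign) in $\widetilde{\bDelta}_k$ to get the factor $1/\sqrt{2}$ in the Frobenius identity. The only cosmetic difference is that you make the linearity of the tilde map and the block-by-block bookkeeping explicit, whereas the paper carries out the same computation entrywise without isolating $\bDelta_k$ as a named object.
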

\begin{proof}
    Note that
\begin{align*}
    \|\hat{\bD}_k - \bD_k\|_\infty =&\ \max \limits_{j,l}\left (\left |\hat{\bD}^\mathcal{R}_k(j,l) + \mathrm{i}\hat{\bD}^\mathcal{I}_k(j,l) - \bD^\mathcal{R}_k(j,l) - \mathrm{i}\bD^\mathcal{I}_k(j,l)\right |\right )\\
    =&\ \max \limits_{j,l}\left (\left (\hat{\bD}^\mathcal{R}_k(j,l) - \bD^\mathcal{R}_k(j,l)\right )^2 + \left (\hat{\bD}^\mathcal{I}_k(j,l) - \bD^\mathcal{I}_k(j,l)\right )^2\right )^\frac{1}{2}\\
    \leq &\ \max \limits_{j,l}\left (2\max \left \{\left (\hat{\bD}^\mathcal{R}_k(j,l) - \bD^\mathcal{R}_k(j,l)\right )^2 , \left (\hat{\bD}^\mathcal{I}_k(j,l) - \bD^\mathcal{I}_k(j,l)\right )^2\right \}\right )^\frac{1}{2}\\
    \leq &\ \max \limits_{j,l}\left (\sqrt{2}\max \left \{\left |\hat{\bD}^\mathcal{R}_k(j,l) - \bD^\mathcal{R}_k(j,l)\right | , \left |\hat{\bD}^\mathcal{I}_k(j,l) - \bD^\mathcal{I}_k(j,l)\right |\right \}\right )\\
    = &\ \sqrt{2}\max \left \{\max \limits_{j,l}\left |\hat{\bD}^\mathcal{R}_k(j,l) - \bD^\mathcal{R}_k(j,l)\right | , \max \limits_{j,l}\left |\hat{\bD}^\mathcal{I}_k(j,l) - \bD^\mathcal{I}_k(j,l)\right |\right \}\\
    =&\ \sqrt{2} \max\limits_{j,l} |\hat{\widetilde{\bD}}_k(j,l) - \widetilde{\bD}_k(j,l)| = \sqrt{2} \|\hat{\widetilde{\bD}}_k - \widetilde{\bD}_k\|_\infty.
\end{align*}
Similarly,
\begin{align*}
    \|\hat{\bD}_k - \bD_k\|_\mathrm{F} =&\ \left (\sum \limits_{j,l}\left |\hat{\bD}^\mathcal{R}_k(j,l) + \mathrm{i}\hat{\bD}^\mathcal{I}_k(j,l) - \bD^\mathcal{R}_k(j,l) - \mathrm{i}\bD^\mathcal{I}_k(j,l)\right |^2\right )^\frac{1}{2}\\
    =&\ \left (\sum \limits_{j,l}\left \{\left (\hat{\bD}^\mathcal{R}_k(j,l) - \bD^\mathcal{R}_k(j,l)\right )^2 + \left (\hat{\bD}^\mathcal{I}_k(j,l) - \bD^\mathcal{I}_k(j,l)\right )^2\right \}\right )^\frac{1}{2}\\
    = &\ \left (\frac{1}{2}\sum \limits_{j,l}\left (\hat{\widetilde{\bD}}_k(j,l) - \widetilde{\bD}_k(j,l)\right )^2\right )^\frac{1}{2}= \frac{1}{\sqrt{2}}\|\hat{\widetilde{\bD}}_k - \widetilde{\bD}_k\|_\mathrm{F}.
\end{align*}
This completes the proof. 
\end{proof}
Consider the following quantities:
\begin{align}\label{quantities}
    \hat{\Gamma}_k =& \Gamma(\hat{\widetilde{\bS}}_{1k},\hat{\widetilde{\bS}}_{2k})\ \kappa_{\Gamma_k} =\ \|(\Gamma_{k, \tilde{S}_k\tilde{S}_k})^{-1}\|_{1,\infty},\ \kappa_{\Gamma^\top_k} = \|(\Gamma^\top_{k, \tilde{S}_k\tilde{S}_k})^{-1}\|_{1,\infty},\text{ and }\tilde{\sigma}^2_{kl} = \max_j\ (5\widetilde{\bS}_{lk}(j,j)/2)^2.\nonumber\\
    G_{1k} =& \frac{\tilde{\sigma}_{k\bX}}{\sqrt{n_1}} + \frac{\tilde{\sigma}_{k\bY}}{\sqrt{n_2}}, \ G_{2k} = \frac{\tilde{\sigma}_{k\bX}}{\sqrt{n_1}}\frac{\tilde{\sigma}_{k\bY}}{\sqrt{n_2}},\nonumber\\
    A_k = &\ M\alpha_k/(4-\alpha_k),\ C_G = 3200M^2,\widetilde{M}_k = 24 M\tilde{s}_{k} (2M^2\tilde{s}_{k}\kappa^2_{\Gamma_k} +\kappa_{\Gamma_k})/\alpha_k,\nonumber\\
    \bar{\sigma}_k =&\ \min \Bigg \{-M + \left [M^2 + (6\tilde{s}_{k}\kappa_{\Gamma_k} )^{-1}\right ]^\frac{1}{2}, -M + \left [M^2 + \frac{\alpha_k}{24\tilde{s}_{k}(2\tilde{s}_{k}M^2\kappa^2_{\Gamma_k }+\kappa_{\Gamma_k})}\right ]^\frac{1}{2},\nonumber \\
    &\ \ \ \hspace{1in} A_k, 16\tilde{\sigma}_{k\bX},16\tilde{\sigma}_{k\bY}\Bigg \},\nonumber \\
   M_{G_k} =&\ 240\sqrt{2}M^2\tilde{s}_{k}\kappa^2_{\Gamma_k} (2M + A_k) + 40\sqrt{2}M\left \{ \kappa_{\Gamma_k} + 3\tilde{s}_{k} \kappa^2_{\Gamma_k}A_k (2M + A_k)\right \}\nonumber \\
    &\ \times \left [2 + \max \left \{ 24M\tilde{s}_{k}(2M^2\tilde{s}_{k}\kappa^2_{\Gamma_k} + \kappa_{\Gamma_k})(2M + A_k)/\alpha_k , 4(4-\alpha_k)/\alpha_k\right \} \right ].
\end{align}

\noindent It is clear from the above definitions that for all $k$ with $\omega_k\in \Omega_T\backslash\Omega^\bD_T$,
\begin{gather}\label{const-noise}
\kappa_{\Gamma_k}=\kappa_{\Gamma^\top_k}=0,\ \widetilde{M}_k=0,\nonumber\\
\bar{\sigma}_k =\min \left \{A_k, 16\tilde{\sigma}_{k\bX},16\tilde{\sigma}_{k\bY}\right \},\text{ and }M_{G_k}=0.
\end{gather}
For the consistency results to hold we will be needing \[\textcolor{black}{\min\{n_1,n_2\}> 3200M^2\max_k\bar{\sigma}_k^{-2}(\eta\ln{p} + \ln{4})\equiv \min_k\bar{\sigma}_k>40\sqrt{2}M\theta^\frac{1}{2}_\eta(n,p)}.\]
Observe that a sufficient condition for $\min_k\bar{\sigma}_k>40\sqrt{2}M\theta^\frac{1}{2}_\eta(n,p)$ to hold is the following:
\begin{align*}
    \min \Bigg \{&\min_k \left (-M + \left [M^2 + (6\tilde{s}_{k}\kappa_{\Gamma_k} )^{-1}\right ]^\frac{1}{2}\right ),\ \min_k \left (-M + \left [M^2 + \frac{\alpha_k}{24\tilde{s}_{k}(2\tilde{s}_{k}M^2\kappa^2_{\Gamma_k }+\kappa_{\Gamma_k})}\right ]^\frac{1}{2}\right ),\\
    &\min_k A_k,\ \min_k 16\tilde{\sigma}_{k\bX},\ \min_k 16\tilde{\sigma}_{k\bY}\Bigg \}>40\sqrt{2}M\theta^\frac{1}{2}_\eta(n,p).
\end{align*}
This sufficient condition reduces to the following set of inequalities:
\begin{align}\label{suff_ineqs}
    &\min_k \left [M^2 + (6\tilde{s}_{k}\kappa_{\Gamma_k} )^{-1}\right ]^\frac{1}{2}> M +40\sqrt{2}M\theta^\frac{1}{2}_\eta(n,p),\nonumber\\
    &\min_k \left [M^2 + \frac{\alpha_k}{24\tilde{s}_{k}(2\tilde{s}_{k}M^2\kappa^2_{\Gamma_k }+\kappa_{\Gamma_k})}\right ]^\frac{1}{2}>M + 40\sqrt{2}M\theta^\frac{1}{2}_\eta(n,p),\nonumber\\
    &\min_k A_k>40\sqrt{2}M\theta^\frac{1}{2}_\eta(n,p),\nonumber\\
    &\min_k 16\tilde{\sigma}_{k\bX}>40\sqrt{2}M\theta^\frac{1}{2}_\eta(n,p),\text{ and }\nonumber\\
    &\min_k 16\tilde{\sigma}_{k\bY}>40\sqrt{2}M\theta^\frac{1}{2}_\eta(n,p).
\end{align}
Let us consider the first inequality in \eqref{suff_ineqs}.
\begin{align}\label{ineq1}
    &\ \min_k \left [M^2 + (6\tilde{s}_{k}\kappa_{\Gamma_k} )^{-1}\right ]^\frac{1}{2}> M +40\sqrt{2}M\theta^\frac{1}{2}_\eta(n,p)\nonumber\\
    \Leftarrow &\ M^2 + \min_k  (6\tilde{s}_{k}\kappa_{\Gamma_k} )^{-1}> M^2 +80\sqrt{2}M^2\theta^\frac{1}{2}_\eta(n,p) + 3200M^2\theta_\eta(n,p)\nonumber\\
    \Leftarrow &\ \min_k  (6\tilde{s}_{k}\kappa_{\Gamma_k} )^{-1}> 80\sqrt{2}M^2\left (\theta^\frac{1}{2}_\eta(n,p) + 20\sqrt{2}\theta_\eta(n,p)\right )\nonumber\\
    \Leftarrow &\ \max_k  \tilde{s}_{k}\kappa_{\Gamma_k}<\frac{1}{ 480\sqrt{2}M^2\left (\theta^\frac{1}{2}_\eta(n,p) + 20\sqrt{2}\theta_\eta(n,p)\right )}.
\end{align}
Similarly, from the second inequality in \eqref{suff_ineqs} we obtain
\begin{align}\label{ineq2}
    &\ \min_k \left [M^2 + \frac{\alpha_k}{24\tilde{s}_{k}(2\tilde{s}_{k}M^2\kappa^2_{\Gamma_k }+\kappa_{\Gamma_k})}\right ]^\frac{1}{2}>M + 40\sqrt{2}M\theta^\frac{1}{2}_\eta(n,p)\nonumber\\
   \Leftarrow &\ M^2 +\min_k  \frac{\alpha_k}{24\tilde{s}_{k}(2\tilde{s}_{k}M^2\kappa^2_{\Gamma_k }+\kappa_{\Gamma_k})}>M^2 + 80\sqrt{2}M^2\theta^\frac{1}{2}_\eta(n,p)+3200M^2\theta_\eta(n,p)\nonumber\\
   \Leftarrow &\ \min_k  \frac{\alpha_k}{24\tilde{s}_{k}(2\tilde{s}_{k}M^2\kappa^2_{\Gamma_k }+\kappa_{\Gamma_k})}>80\sqrt{2}M^2\theta^\frac{1}{2}_\eta(n,p)+3200M^2\theta_\eta(n,p)\nonumber\\
   \Leftarrow &\ \max_k  \frac{24\tilde{s}_{k}(2\tilde{s}_{k}M^2\kappa^2_{\Gamma_k }+\kappa_{\Gamma_k})}{\alpha_k}<\frac{1}{80\sqrt{2}M^2\theta^\frac{1}{2}_\eta(n,p)+3200M^2\theta_\eta(n,p)}\nonumber\\
   \Leftarrow &\ \max_k  \frac{4\tilde{s}_{k}(2\tilde{s}_{k}M^2\kappa^2_{\Gamma_k }+\kappa_{\Gamma_k})}{\alpha_k}<\frac{1}{480\sqrt{2}M^2\left (\theta^\frac{1}{2}_\eta(n,p)+20\sqrt{2}\theta_\eta(n,p)\right )}.
\end{align}
Combining \eqref{ineq1}and \eqref{ineq2}, we have
\begin{align*}
    &\ \max\left \{\max_k \tilde{s}_k\kappa_{\Gamma_k},\ \max_k  \frac{4\tilde{s}_{k}\kappa_{\Gamma_k }(2M^2\tilde{s}_{k}\kappa_{\Gamma_k }+1)}{\alpha_k}\right \}<\frac{1}{480\sqrt{2}M^2\left (\theta^\frac{1}{2}_\eta(n,p)+20\sqrt{2}\theta_\eta(n,p)\right )}\nonumber\\
    \Leftarrow &\     \max\left \{\max_k \tilde{s}_k\kappa_{\Gamma_k},\ \frac{4\max_k  \tilde{s}_{k}\kappa_{\Gamma_k }(2M^2\max_k  \tilde{s}_{k}\kappa_{\Gamma_k }+1)}{\min_k\alpha_k}\right \}<\frac{1}{480\sqrt{2}M^2\left (\theta^\frac{1}{2}_\eta(n,p)+20\sqrt{2}\theta_\eta(n,p)\right )}\nonumber
\end{align*}
Since $4(2M^2\max_k\tilde{s}_{k}\kappa_{\Gamma_k }+1)>1$ and $\min_k\alpha_k<1$, 
\begin{align*}
    \max\left \{\max_k \tilde{s}_k\kappa_{\Gamma_k},\ \frac{4\max_k  \tilde{s}_{k}\kappa_{\Gamma_k }(2M^2\max_k  \tilde{s}_{k}\kappa_{\Gamma_k }+1)}{\min_k\alpha_k}\right \} = \frac{4\max_k  \tilde{s}_{k}\kappa_{\Gamma_k }(2M^2\max_k  \tilde{s}_{k}\kappa_{\Gamma_k }+1)}{\min_k\alpha_k}.
\end{align*}
Also, it follows from the definition of $\tilde{s}_{k}$ and $\kappa_{\Gamma_k }$ that
\begin{align*}
\max_k  \tilde{s}_{k}\kappa_{\Gamma_k } = \max_k  |\tilde{S}_{k}|\ \|\left (\Gamma_{k,\tilde{S}_{k}\tilde{S}_{k}}\right )^{-1}\|_{1,\infty}\geq \max_k \ \|\left (\Gamma_{k,\tilde{S}_{k}\tilde{S}_{k}}\right )^{-1} \|_1
\end{align*}
Now, \begin{align*}
    &\ \frac{4\max_k  \tilde{s}_{k}\kappa_{\Gamma_k }(2M^2\max_k  \tilde{s}_{k}\kappa_{\Gamma_k }+1)}{\min_k\alpha_k}<\frac{1}{480\sqrt{2}M^2\left (\theta^\frac{1}{2}_\eta(n,p)+20\sqrt{2}\theta_\eta(n,p)\right )}\nonumber\\
    \Leftrightarrow &\ \frac{\min_k\alpha_k}{4\max_k  \tilde{s}_{k}\kappa_{\Gamma_k }(2M^2\max_k  \tilde{s}_{k}\kappa_{\Gamma_k }+1)}>480\sqrt{2}M^2\left (\theta^\frac{1}{2}_\eta(n,p)+20\sqrt{2}\theta_\eta(n,p)\right ).
\end{align*}
Next, the third inequality in \eqref{suff_ineqs} is implied by the following condition:
\begin{align*}
     &\ \min_k A_k>40\sqrt{2}M\theta^\frac{1}{2}_\eta(n,p)\nonumber\\
     \Leftarrow &\ \min_k \frac{\alpha_k}{4-\alpha_k}>40\sqrt{2}\theta^\frac{1}{2}_\eta(n,p)\nonumber\\
      \Leftarrow &\ \min_k \frac{4}{4-\alpha_k}>1+40\sqrt{2}\theta^\frac{1}{2}_\eta(n,p)\nonumber\\
      \Leftarrow &\ \max_k \frac{4-\alpha_k}{4}<\frac{1}{1+40\sqrt{2}\theta^\frac{1}{2}_\eta(n,p)}\nonumber\\
      \Leftarrow &\ 1 - \min_k \frac{\alpha_k}{4}<\frac{1}{1+40\sqrt{2}\theta^\frac{1}{2}_\eta(n,p)}\nonumber\\
      \Leftarrow &\ \min_k \alpha_k>4(1 - \frac{1}{1+40\sqrt{2}\theta^\frac{1}{2}_\eta(n,p)})\nonumber\\
      \Leftarrow &\ \min_k \alpha_k>4\left (1 + \frac{1}{40\sqrt{2}\theta^\frac{1}{2}_\eta(n,p)}\right )^{-1}.
\end{align*}
The fourth inequality in \eqref{suff_ineqs} is implied by
\begin{align*}
    &\ \min_k 16\tilde{\sigma}_{k\bX}>40\sqrt{2}M\theta^\frac{1}{2}_\eta(n,p)\nonumber\\
    \Leftrightarrow &\ \min_k 16\max_j\ \frac{5}{2}\bS_{1k}(j,j)>40\sqrt{2}M\theta^\frac{1}{2}_\eta(n,p)\nonumber\\
    \Leftrightarrow &\ \min_k\max_j\ \bS_{1k}(j,j)>\sqrt{2}M\theta^\frac{1}{2}_\eta(n,p).
\end{align*}
Similarly, the fifth inequality in \eqref{suff_ineqs} holds if 
\begin{align*}
    \min_k\max_j\ \bS_{2k}(j,j)>\sqrt{2}M\theta^\frac{1}{2}_\eta(n,p).
\end{align*}
\begin{lemma}\label{finite_tune}
    If assumption A4 is satisfied, then $\max_k \lambda_k <\infty$.
\end{lemma}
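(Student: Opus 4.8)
The plan is to show that every factor entering the expression \eqref{lamb_def} for $\lambda_{\eta k}$ is controlled uniformly in $k$, so that the maximum over the finitely many Fourier frequencies is finite. Writing $\lambda_{\eta k}=\max\{2MG_{1k}/A_k,\ \sqrt{128(\eta\ln p+\ln 4)}\,\widetilde{M}_k G_{2k}+MG_{1k}\widetilde{M}_k\}\,\sqrt{128(\eta\ln p+\ln 4)}$, the outer factor $\sqrt{128(\eta\ln p+\ln 4)}$ is a fixed finite number for given $p$ and $\eta$, so it suffices to bound $G_{1k}$, $G_{2k}$, $1/A_k$, and $\widetilde{M}_k$ uniformly over $k$.

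First I would bound $G_{1k}$ and $G_{2k}$ from above. From the definition $\tilde{\sigma}^2_{kl}=\max_j(5\widetilde{\bS}_{lk}(j,j)/2)^2$ and the standing assumption $\max_k\max(\|\widetilde{\bS}_{1k}\|_\infty,\|\widetilde{\bS}_{2k}\|_\infty)\leq 2M$, we obtain $\tilde{\sigma}_{k\bX},\tilde{\sigma}_{k\bY}\leq 5M$ for every $k$, since $\max_j|\widetilde{\bS}_{lk}(j,j)|\leq\|\widetilde{\bS}_{lk}\|_\infty$. Consequently $G_{1k}\leq 5M(n_1^{-1/2}+n_2^{-1/2})$ and $G_{2k}\leq 25M^2(n_1 n_2)^{-1/2}$, both uniform in $k$ and finite for finite sample sizes.

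Next I would control $A_k$ and $\widetilde{M}_k$ using assumption A4, which supplies the uniform positivity $\min_k\alpha_k>0$ and the boundedness of $\max_k\tilde{s}_k\kappa_{\Gamma_k}$ (analogous to A1.b--A1.c, where A1.b forces $\max_k\tilde{s}_k\kappa_{\Gamma_k}=o(\theta_{\eta_1}^{-1/2}(n,p))$ and hence finite, and A1.c gives a positive lower bound for $\alpha_k$). Since $A_k=M\alpha_k/(4-\alpha_k)$ is increasing in $\alpha_k$, the bound $A_k\geq M\min_k\alpha_k/(4-\min_k\alpha_k)>0$ makes $1/A_k$ bounded above uniformly in $k$. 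For $\widetilde{M}_k$, I would rewrite $\widetilde{M}_k=(24M/\alpha_k)\{2M^2(\tilde{s}_k\kappa_{\Gamma_k})^2+\tilde{s}_k\kappa_{\Gamma_k}\}$ and combine the uniform boundedness of $\max_k\tilde{s}_k\kappa_{\Gamma_k}$ with $\min_k\alpha_k>0$ to get a uniform upper bound on $\widetilde{M}_k$. Multiplying the (now uniformly bounded) arguments of the outer maximum by the finite factor $\sqrt{128(\eta\ln p+\ln 4)}$ then yields $\max_k\lambda_{\eta k}<\infty$.

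The only delicate point is the lower bound on $A_k$: because $1/A_k$ enters $\lambda_{\eta k}$, an $\alpha_k$ vanishing along some subsequence of frequencies would blow up the tuning parameter, and this is precisely what the uniform positivity $\min_k\alpha_k>0$ from A4 rules out. For the noise frequencies $\omega_k\in\Omega^0_T$ one has $\alpha_k=1$ and $\kappa_{\Gamma_k}=\widetilde{M}_k=0$ automatically (as noted after \eqref{alpha_kappa}), so the issue is confined to $\omega_k\in\Omega^\bD_T$, where A4 provides the required control; the remaining bounds are a routine application of the standing moment assumptions on the SDM diagonals.
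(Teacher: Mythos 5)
Your argument has a genuine gap: it establishes the lemma only in a regime where the lemma is vacuous. You treat $\sqrt{128(\eta\ln p+\ln 4)}$ as ``a fixed finite number for given $p$'' and then bound $G_{1k}$, $G_{2k}$, $1/A_k$, and $\widetilde{M}_k$ uniformly in $k$. But for fixed $(n,p)$ the statement $\max_k\lambda_{\eta k}<\infty$ is trivial --- a maximum of finitely many finite numbers is finite, and no assumption beyond $\alpha_k>0$ would be needed. The actual content of the lemma, in the paper's framework where ``we consider $p$ to be increasing with $n$ and $T$'', is that $\max_k\lambda_{\eta k}$ remains bounded along the asymptotic sequence in which $\ln p$ diverges. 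In that regime your first step collapses: the outer factor $\sqrt{128(\eta\ln p+\ln 4)}$ is unbounded, so uniform-in-$k$ bounds on the bracketed quantities alone cannot close the argument. The paper's proof instead pairs this growing factor with the decaying factors $n_1^{-1/2},n_2^{-1/2}$ hidden inside $G_{1k}$ and $G_{2k}$, producing $\theta_\eta^{1/2}(n,p)$ and $\theta_\eta(n,p)$, and only then invokes A4.

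Your reading of A4 is also stronger than what the assumptions you cite as analogues actually deliver. You claim A4 gives uniform positivity $\min_k\alpha_k\geq c>0$ and uniform boundedness of $\max_k\tilde{s}_k\kappa_{\Gamma_k}$, ``analogous to A1.b--A1.c''. But A1.b is only a rate condition: $\max_k\tilde{s}_k\kappa_{\Gamma_k}=o\bigl(\theta_{\eta_1}^{-1/2}(n,p)\bigr)$ permits this quantity to diverge as $n,p\to\infty$; and A1.c only requires $\min_k\alpha_k>4\max\{\rho_{\eta_1}(n,p),\psi_{\eta_1}(n,p)\}$, a threshold that tends to zero, so $\min_k\alpha_k$ may itself tend to zero. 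What the paper's proof extracts from A4 is precisely control of the \emph{products} $\theta_\eta^{1/2}(n,p)\bigl(4/\min_k\alpha_k-1\bigr)$ and $\bigl\{M^2\theta_\eta(n,p)+M\theta_\eta^{1/2}(n,p)\bigr\}\max_k\tilde{s}_k\kappa_{\Gamma_k}\bigl(\max_k\tilde{s}_k\kappa_{\Gamma_k}+1\bigr)/\min_k\alpha_k$: the decay of $\theta_\eta$ must compensate the possible growth of $1/\min_k\alpha_k$ and of $\max_k\tilde{s}_k\kappa_{\Gamma_k}$. Your decomposition cannot recover this, because you have already severed the $\ln p$ factor from the $1/\sqrt{n}$ factors. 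To repair the proof, first rewrite $\{128(\eta\ln p+\ln 4)\}^{1/2}G_{1k}\leq c\,M\theta_\eta^{1/2}(n,p)$ and $128(\eta\ln p+\ln 4)\,G_{2k}\leq c\,M^2\theta_\eta(n,p)$, and only then apply A4 to the resulting products --- which is exactly the route the paper takes.
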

\noindent \textbf{Proof of Lemma \ref{finite_tune}: }
Let us consider the first term in $\max_k\lambda_k$.
\begin{align*}
&\ \{128(\eta\ln{p}+\ln{4})\}^\frac{1}{2}\max_k \frac{2MG_{1k}}{A_k}\\
=&\ \{128(\eta\ln{p}+\ln{4})\}^\frac{1}{2}\max_k \left (\frac{8}{\alpha_k}-2\right )\left (\frac{\tilde{\sigma}_{k\bX}}{\sqrt{n_1}} + \frac{\tilde{\sigma}_{k\bY}}{\sqrt{n_2}}\right )\\
\leq &\ c_1\frac{(\eta\ln{p}+\ln{4})^\frac{1}{2}}{n^\frac{1}{2}}\left (\frac{4}{\min_k\alpha_k}-1\right )\max_k \left (\tilde{\sigma}_{k\bX} + \tilde{\sigma}_{k\bY}\right )\\
=&\ c_1\theta^\frac{1}{2}_\eta(n,p)\left (\frac{4}{\min_k\alpha_k}-1\right )\max_k \left (\tilde{\sigma}_{k\bX} + \tilde{\sigma}_{k\bY}\right )\\
\leq &\ c_2\max_k \left (\tilde{\sigma}_{k\bX} + \tilde{\sigma}_{k\bY}\right )\text{ [follows from A4]}\\
\leq &\ c_3\max_k \left (\ \max_j\ \bS_{1k}(j,j)+ \max_j\ \bS_{2k}(j,j)\right )\\
\leq &\ c_3\left (\max_k  \max_j\ \bS_{1k}(j,j)+\max_k  \max_j\ \bS_{2k}(j,j)\right )\leq 2c_3M.
\end{align*}
Now, consider the second term in $\max_k \lambda_k$.
\begin{align*}
    &\ \max_k \left (\{128(\eta\ln{p} + \ln{4})\}^\frac{1}{2}\widetilde{M}_k G_{2k} + MG_{1k} \widetilde{M}_k\right )\{128(\eta\ln{p} + \ln{4})\}^\frac{1}{2}\\
    =&\ \max_k\left (\{128(\eta\ln{p} + \ln{4})\}\widetilde{M}_k G_{2k} + MG_{1k} \widetilde{M}_k\{128(\eta\ln{p} + \ln{4})\}^\frac{1}{2}\right )\\
    \leq &\ c_1\frac{(\eta\ln{p} + \ln{4})}{n}\max_k\left (\max_j\ \bS_{1k}(j,j)\max_j\ \bS_{2k}(j,j)\right )\frac{\max_k \tilde{s}_{k} \kappa_{\Gamma_k}}{\min_k\alpha_k}(\max_k\tilde{s}_{k}\kappa_{\Gamma_k} +1)  \\
    &\ \ + c_2 \frac{(\eta\ln{p} + \ln{4})^\frac{1}{2}}{n^\frac{1}{2}}\max_k\left (\max_j\ \bS_{1k}(j,j)+\max_j\ \bS_{2k}(j,j)\right )\frac{\max_k\tilde{s}_{k} \kappa_{\Gamma_k}}{\min_k\alpha_k}(\max_k\tilde{s}_{k}\kappa_{\Gamma_k} +1)\\
   = &\ \left \{c_1M^2 \theta_{\eta}(n,p) + 2c_2M \theta^\frac{1}{2}_{\eta}(n,p)\right \}\frac{\max_k \tilde{s}_{k} \kappa_{\Gamma_k}}{\min_k\alpha_k}(\max_k\tilde{s}_{k}\kappa_{\Gamma_k} +1)<\infty \text{ [follows from A4]}.
\end{align*}\hfill\QEDB

\noindent \textbf{Proof of Theorem \ref{thm1}:} Recall the definition of $\bar{\sigma}_k$ given in \eqref{quantities}. If the assumptions A2-A4 are satisfied, then we have $$n>40\sqrt{2}\max_k\bar{\sigma}^{-2}_k(\eta\ln{p}+\ln{4}),\text{ where }\eta = \max\{\eta_1,\eta_2,\eta_3\}>2.$$ 
\begin{enumerate}
    \item Fix $k\in\{1,\ldots, T^\prime\}$ and observe that $n>40\sqrt{2}\bar{\sigma}^{-2}_k(\eta\ln{p}+\ln{4})$. The proof readily follows from Theorem 1 of \cite{yuan2017differential}.
    \item Due to Theorem 1 of \cite{yuan2017differential}, we have 
    \begin{align*}
    &\ \mathrm{P}\left [\|\hat{\widetilde{\bD}}_k - \widetilde{\bD}_k\|_\infty\leq M_{G_k}\theta_\eta^\frac{1}{2}(n,p)\right ]>1-\frac{2}{p^{\eta-2}},\\
    \text{i.e., }&\ \mathrm{P}\left [\|\hat{\widetilde{\bD}}_k - \widetilde{\bD}_k\|_\infty\leq \max_k M_{G_k}\theta_\eta^\frac{1}{2}(n,p)\right ]>1-\frac{2}{p^{\eta-2}},\\
     \text{i.e., }&\ \mathrm{P}\left [\|\hat{\widetilde{\bD}}_k - \widetilde{\bD}_k\|_\infty\geq \max_k M_{G_k}\theta_\eta^\frac{1}{2}(n,p)\right ]\leq \frac{2}{p^{\eta-2}}\ \text{ for all }k=1,\ldots, T^\prime.
    \end{align*}
    Now, $\max_k\|\hat{\widetilde{\bD}}_k - \widetilde{\bD}_k\|_\infty\geq \max_k M_{G_k}\theta_\eta^\frac{1}{2}(n,p)
        \Rightarrow \|\hat{\widetilde{\bD}}_k - \widetilde{\bD}_k\|_\infty\geq \max_k M_{G_k}\theta_\eta^\frac{1}{2}(n,p)\text{ for some }k$. Therefore,
    \begin{align*}
        &\ \mathrm{P}\left [\max_k\|\hat{\widetilde{\bD}}_k - \widetilde{\bD}_k\|_\infty\geq \max_k M_{G_k}\theta_\eta^\frac{1}{2}(n,p)\right ]\\
        \leq &\  \mathrm{P}\left [\|\hat{\widetilde{\bD}}_k - \widetilde{\bD}_k\|_\infty\geq \max_k M_{G_k}\theta_\eta^\frac{1}{2}(n,p)\text{ for some }k\right ]\\
        \leq &\  \sum\limits_{k=1}^{T^\prime}\mathrm{P}\left [\|\hat{\widetilde{\bD}}_k - \widetilde{\bD}_k\|_\infty\geq \max_k M_{G_k}\theta_\eta^\frac{1}{2}(n,p)\right ]\leq \frac{T}{p^{\eta-2}}.
    \end{align*}
    \item Again, due to Theorem 1 of \cite{yuan2017differential}, we have 
    \begin{align*}
    &\ \mathrm{P}\left [\|\hat{\widetilde{\bD}}_k - \widetilde{\bD}_k\|_\mathrm{F}\leq M_{G_k}\tilde{s}^\frac{1}{2}_k\theta_\eta^\frac{1}{2}(n,p)\right ]>1-\frac{2}{p^{\eta-2}},\ \text{ for all }k=1,\ldots, T^\prime.
    \end{align*}
    Following similar arguments detailed in $(b)$ , we have 
     \begin{align*}
        \mathrm{P}\left [\max_k\|\hat{\widetilde{\bD}}_k - \widetilde{\bD}_k\|_\infty\geq \max_k M_{G_k}\tilde{s}^\frac{1}{2}_k\theta_\eta^\frac{1}{2}(n,p)\right ]\leq \frac{T}{p^{\eta-2}}.
    \end{align*}
\end{enumerate} 
This completes the proof. \hfill\QEDB 
\begin{lemma}\label{sparsemean_lem1}
    Assume that $A_{\widetilde{\bS}k}\psi_k\epsilon_{\bD k}<1$. We have
    \begin{align*}
        \left \|\left (\widetilde{\bS}_{k, \Psi_k\Psi_k}\right ) ^{-1}-\left (\hat{\widetilde{\bS}}_{k, \Psi_k\Psi_k}\right )^{-1}\right \|_{1,\infty}\leq \frac{A^2_{\bS k}\psi_k\epsilon_{\bD k}}{1-A_{\bS k}\psi_k\epsilon_{\bD k}}\text { for all } k=1,\ldots, T^\prime.
    \end{align*}
\end{lemma}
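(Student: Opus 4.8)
The plan is to treat this as a standard first-order perturbation bound for the inverse of a matrix, specialized to the submatrices indexed by $\Psi_k$ and measured in the $\|\cdot\|_{1,\infty}$ norm. Writing $\bA = \widetilde{\bS}_{k,\Psi_k\Psi_k}$, $\hat{\bA} = \hat{\widetilde{\bS}}_{k,\Psi_k\Psi_k}$, and $\bF = \hat{\bA} - \bA$ for the estimation error on this submatrix, the first step is to record the exact identity
\[
    \bA^{-1} - \hat{\bA}^{-1} = \bA^{-1}(\hat{\bA} - \bA)\hat{\bA}^{-1} = \bA^{-1}\bF\hat{\bA}^{-1}.
\]
Since $\|\cdot\|_{1,\infty}$ is the operator norm induced by the vector $\ell_\infty$ norm, it is submultiplicative, so applying it to both sides gives
\[
    \|\bA^{-1} - \hat{\bA}^{-1}\|_{1,\infty} \leq \|\bA^{-1}\|_{1,\infty}\,\|\bF\|_{1,\infty}\,\|\hat{\bA}^{-1}\|_{1,\infty}.
\]

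Next I would control the perturbed factor $\|\hat{\bA}^{-1}\|_{1,\infty}$ by a Neumann-series argument. Factoring $\hat{\bA} = \bA(\bI + \bA^{-1}\bF)$ yields $\hat{\bA}^{-1} = (\bI + \bA^{-1}\bF)^{-1}\bA^{-1}$, and under the hypothesis $\|\bA^{-1}\bF\|_{1,\infty} \leq \|\bA^{-1}\|_{1,\infty}\|\bF\|_{1,\infty} < 1$, so the series $\sum_{m\geq 0}(-\bA^{-1}\bF)^m$ converges and gives $\|(\bI + \bA^{-1}\bF)^{-1}\|_{1,\infty} \leq (1 - \|\bA^{-1}\|_{1,\infty}\|\bF\|_{1,\infty})^{-1}$. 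Hence
\[
    \|\hat{\bA}^{-1}\|_{1,\infty} \leq \frac{\|\bA^{-1}\|_{1,\infty}}{1 - \|\bA^{-1}\|_{1,\infty}\|\bF\|_{1,\infty}}.
\]

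The remaining step is to translate $\|\bA^{-1}\|_{1,\infty}$ and $\|\bF\|_{1,\infty}$ into the constants in the statement. By definition $\|\bA^{-1}\|_{1,\infty} = A_{\bS k}$, and since $\bF$ is a $\psi_k\times\psi_k$ submatrix (with $\psi_k = |\Psi_k|$) whose entries are bounded in absolute value by the element-wise error $\epsilon_{\bD k}$, each of its row sums is at most $\psi_k\epsilon_{\bD k}$, i.e. $\|\bF\|_{1,\infty} \leq \psi_k\epsilon_{\bD k}$. Substituting into the product of the two displayed bounds, and using that $x \mapsto A_{\bS k}^2 x/(1 - A_{\bS k}x)$ is increasing on $\{x : A_{\bS k}x < 1\}$ so the larger value $\psi_k\epsilon_{\bD k}$ may be inserted without reversing the inequality, gives exactly
\[
    \|\bA^{-1} - \hat{\bA}^{-1}\|_{1,\infty} \leq \frac{A_{\bS k}^2\psi_k\epsilon_{\bD k}}{1 - A_{\bS k}\psi_k\epsilon_{\bD k}}.
\]

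I expect no genuine obstacle here, as the whole argument is a textbook resolvent perturbation estimate. The one place requiring care is the Neumann-series bound on $\|\hat{\bA}^{-1}\|_{1,\infty}$: one must verify that the assumed condition $A_{\bS k}\psi_k\epsilon_{\bD k} < 1$ indeed forces $\|\bA^{-1}\bF\|_{1,\infty} < 1$ (via submultiplicativity together with $\|\bF\|_{1,\infty}\leq\psi_k\epsilon_{\bD k}$), since this is what guarantees both that $\hat{\bA}$ is invertible on $\Psi_k$ and that the series expansion is legitimate. The only modeling assumption beyond routine linear algebra is the element-wise control $|\bF_{ij}|\leq\epsilon_{\bD k}$, which is where the concentration bound on the smoothed-periodogram estimates $\hat{\widetilde{\bS}}_{k}$ enters.
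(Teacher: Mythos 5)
Your proof is correct, and it shares the paper's two main ingredients: the exact perturbation identity $\bA^{-1}-\hat{\bA}^{-1}=\bA^{-1}(\hat{\bA}-\bA)\hat{\bA}^{-1}$ combined with submultiplicativity of the $\|\cdot\|_{1,\infty}$ operator norm, and the entrywise-to-operator-norm bound $\|\hat{\bA}-\bA\|_{1,\infty}\leq \psi_k\epsilon_{\bD k}$. Where you genuinely diverge is in how the factor $\|\hat{\bA}^{-1}\|_{1,\infty}$ is controlled. The paper bounds it by the triangle inequality, $\|\hat{\bA}^{-1}\|_{1,\infty}\leq A_{\bS k}+\|\bA^{-1}-\hat{\bA}^{-1}\|_{1,\infty}$, which turns the estimate into a self-bounding inequality
\begin{align*}
x\leq \left (A_{\bS k}+x\right )A_{\bS k}\psi_k\epsilon_{\bD k},\qquad x=\left \|\bA^{-1}-\hat{\bA}^{-1}\right \|_{1,\infty},
\end{align*}
whose rearrangement under $A_{\bS k}\psi_k\epsilon_{\bD k}<1$ yields the stated bound. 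You instead factor $\hat{\bA}=\bA(\bI+\bA^{-1}\bF)$ and invoke the Neumann series to get $\|\hat{\bA}^{-1}\|_{1,\infty}\leq A_{\bS k}/(1-A_{\bS k}\psi_k\epsilon_{\bD k})$ directly, then multiply the three factors. The two routes are of comparable length and give the identical constant, but yours buys something the paper's does not: the Neumann argument \emph{proves} that the perturbed submatrix is invertible under the stated hypothesis, whereas solving the paper's self-bounding inequality is only legitimate if one already knows $x<\infty$, i.e.\ it tacitly assumes the invertibility that is implicit in writing $(\hat{\widetilde{\bS}}_{k,\Psi_k\Psi_k})^{-1}$ in the statement. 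Your closing caveat is also well placed: the quantities $A_{\bS k}$, $\psi_k$, $\epsilon_{\bD k}$, $\Psi_k$ are nowhere defined in the paper (the lemma is orphaned from a removed section), and your reading of them --- $A_{\bS k}$ as the $\ell_{1,\infty}$ norm of the true inverse submatrix, $\psi_k=|\Psi_k|$, and $\epsilon_{\bD k}$ as an elementwise error bound --- is exactly what the paper's own proof implicitly uses.
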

\begin{proof}
    This lemma can be easily proved using the following observation:
\begin{align*}
    \left \|\left (\widetilde{\bS}_{k, \Psi_k\Psi_k}\right ) ^{-1}-\left (\hat{\widetilde{\bS}}_{k, \Psi_k\Psi_k}\right )^{-1}\right \|_{1,\infty}&\leq
    \left \|\left (\hat{\widetilde{\bS}}_{k, \Psi_k\Psi_k}\right ) ^{-1}\right \|_{1,\infty}
    \left \|\widetilde{\bS}_{k, \Psi_k\Psi_k}-\hat{\widetilde{\bS}}_{k, \Psi_k\Psi_k}\right \|_{1,\infty}
    \left \|\left (\widetilde{\bS}_{k, \Psi_k\Psi_k}\right ) ^{-1}\right \|_{1,\infty}\\
    &\leq \left (A_{\bS k}+ \left \|\left (\widetilde{\bS}_{k, \Psi_k\Psi_k}\right ) ^{-1}-\left (\hat{\widetilde{\bS}}_{k, \Psi_k\Psi_k}\right )^{-1}\right \|_{1,\infty}\right)A_{\bS k}\psi_k\epsilon_{\bD k}.
\end{align*}
\end{proof}
\begin{lemma}\label{sparsemean_lem2}
For $k=1,\ldots, T^\prime$, we have
    \begin{align*}
        |\hat{\gamma}_k-\gamma_k|_\infty\leq 8\epsilon_{\mu k} &\ + 2(\epsilon_{\bS k}+2M^*\epsilon_{\bD k}A_{2k}+ \tilde{s}_k\epsilon_{\bS k}\epsilon_{\bD k}A_{2k})|\widetilde{\bD}_k(\tilde{\mu}_{1k}-\tilde{\mu}_{2k})|_1\\
        &\ +2(2M + \epsilon_{\bS k})(A_{1k}+\tilde{s}_k\epsilon_{\bD k})\epsilon_{\mu k}.
    \end{align*}
\end{lemma}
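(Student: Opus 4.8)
The plan is to regard $\gamma_k$ as a deterministic function of the population quantities $\widetilde{\bS}_{1k},\widetilde{\bS}_{2k},\widetilde{\bD}_k$, the restricted inverse $(\widetilde{\bS}_{k,\Psi_k\Psi_k})^{-1}$, and the mean difference $\tilde{\mu}_{1k}-\tilde{\mu}_{2k}$, and to regard $\hat{\gamma}_k$ as the same function evaluated at the corresponding plug-in estimates. Using the $D$-trace identity $\widetilde{\bS}_{1k}\widetilde{\bD}_k\widetilde{\bS}_{2k}=\widetilde{\bS}_{1k}-\widetilde{\bS}_{2k}$, one can split $\gamma_k$ into a part that carries the vector $\widetilde{\bD}_k(\tilde{\mu}_{1k}-\tilde{\mu}_{2k})$ and a part that carries the bare difference $\tilde{\mu}_{1k}-\tilde{\mu}_{2k}$. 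I would then expand $\hat{\gamma}_k-\gamma_k$ as a telescoping sum in which exactly one of the arguments (the two means, the SDMs through their restricted inverse, and $\widetilde{\bD}_k$) is swapped for its estimate at a time, and bound each increment in $|\cdot|_\infty$ by the triangle inequality.

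Each increment is a matrix acting on a vector, so I would apply the dual pairings $|\bA v|_\infty\le\|\bA\|_\infty\,|v|_1$ and $|\bA v|_\infty\le\|\bA\|_{1,\infty}\,|v|_\infty$, choosing whichever one keeps the population factor bounded by $2M$ or $2M^*$. The increment that perturbs only the SDM, with $\widetilde{\bD}_k(\tilde{\mu}_{1k}-\tilde{\mu}_{2k})$ held as the input vector, contributes $\epsilon_{\bS k}\,|\widetilde{\bD}_k(\tilde{\mu}_{1k}-\tilde{\mu}_{2k})|_1$; the increment that perturbs $\widetilde{\bD}_k$ contributes the $2M^*\epsilon_{\bD k}A_{2k}$ multiple of the same quantity, where $A_{2k}$ is the restricted-inverse norm and the replacement of $(\widetilde{\bS}_{k,\Psi_k\Psi_k})^{-1}$ by its sample version is absorbed through Lemma \ref{sparsemean_lem1}. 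The increments that perturb the means, with the matrices held at their population or sample values (bounded by $2M+\epsilon_{\bS k}$ and $A_{1k}+\tilde{s}_k\epsilon_{\bD k}$), produce the bare term $8\epsilon_{\mu k}$ together with $2(2M+\epsilon_{\bS k})(A_{1k}+\tilde{s}_k\epsilon_{\bD k})\epsilon_{\mu k}$. Every time an $\ell_1$ norm of a $\widetilde{\bD}_k$-factor is converted to an $\ell_\infty$ norm I would use that $\widetilde{\bD}_k$ has only $\tilde{s}_k$ nonzero entries, which is the source of each $\tilde{s}_k$ appearing in the bound.

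The main obstacle will be the second-order cross term $\tilde{s}_k\epsilon_{\bS k}\epsilon_{\bD k}A_{2k}$, which appears precisely when $\widetilde{\bS}$ and $\widetilde{\bD}_k$ are perturbed simultaneously and cannot be discarded, since no additional smallness is assumed at this stage. I would control it by fixing the order of the telescoping swaps — means first, then $\widetilde{\bD}_k$, then the SDM and its restricted inverse — so that at each stage only one factor carries an error while an already-swapped factor is bounded by its enlarged sample-side constant (this is exactly why $2M$ and $A_{1k}$ are inflated to $2M+\epsilon_{\bS k}$ and $A_{1k}+\tilde{s}_k\epsilon_{\bD k}$). Grouping the resulting increments according to whether they multiply $\epsilon_{\mu k}$ or $|\widetilde{\bD}_k(\tilde{\mu}_{1k}-\tilde{\mu}_{2k})|_1$ then yields the stated inequality; the only remaining care is in tracking the numerical constants, which come from combining the two class contributions ($l=1,2$) and the dual-norm pairings used on the mean terms.
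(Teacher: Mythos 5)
Your overall strategy is the one the paper uses: its proof opens with exactly the one-factor-at-a-time expansion you describe,
\begin{align*}
  |\hat{\gamma}_k-\gamma_k|_\infty \le\ & 4|\hat{\Delta}_{\mu k}-\Delta_{\mu k}|_\infty + |(\Delta_{1k}-\Delta_{2k})\widetilde{\bD}_k\Delta_{\mu k}|_\infty \\
  & + |(\hat{\widetilde{\bS}}_{1k}-\hat{\widetilde{\bS}}_{2k})(\hat{\widetilde{\bD}}_k-\widetilde{\bD}_k)\Delta_{\mu k}|_\infty + |(\hat{\widetilde{\bS}}_{1k}-\hat{\widetilde{\bS}}_{2k})\hat{\widetilde{\bD}}_k(\hat{\Delta}_{\mu k}-\Delta_{\mu k})|_\infty,
\end{align*}
followed by the same dual pairings $|\bM x|_\infty\le\|\bM\|_\infty|x|_1$ and $\|\bM_1\bM_2\|_\infty\le\|\bM_1\|_{1,\infty}\|\bM_2\|_\infty$, the support size $\tilde{s}_k$, and (through $A_{2k}$, i.e., Lemma \ref{sparsemean_lem1}) the conversion of $|\tilde{\mu}_{1k}-\tilde{\mu}_{2k}|_1$ into $A_{2k}|\widetilde{\bD}_k(\tilde{\mu}_{1k}-\tilde{\mu}_{2k})|_1$. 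So the ingredients are all right.

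The genuine problem is your stated swap order --- ``means first, then $\widetilde{\bD}_k$, then the SDM'' --- which is the reverse of what the decomposition above (and the claimed constants) force, and which would break the argument rather than merely permute it. Read the target bound: every term multiplying $|\widetilde{\bD}_k(\tilde{\mu}_{1k}-\tilde{\mu}_{2k})|_1$ must come from an increment in which the mean difference is still at its \emph{population} value $\Delta_{\mu k}$, and the inflated factor $(2M+\epsilon_{\bS k})(A_{1k}+\tilde{s}_k\epsilon_{\bD k})$ multiplying $\epsilon_{\mu k}$ can only arise if the mean is swapped \emph{last}, so that its increment is hit by the all-sample matrices $(\hat{\widetilde{\bS}}_{1k}-\hat{\widetilde{\bS}}_{2k})\hat{\widetilde{\bD}}_k$. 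Hence the order must be: SDM first, then $\widetilde{\bD}_k$, then the means. With your order, the $\widetilde{\bD}_k$- and SDM-increments carry $\hat{\Delta}_{\mu k}$ instead of $\Delta_{\mu k}$, and $|\hat{\Delta}_{\mu k}|_1$ can only be tied back to $|\widetilde{\bD}_k(\tilde{\mu}_{1k}-\tilde{\mu}_{2k})|_1$ at the price of an extra $|\hat{\Delta}_{\mu k}-\Delta_{\mu k}|_1$ term of order $p\,\epsilon_{\mu k}$, which is dimension-dependent and absent from the lemma. Your proposal is also internally inconsistent on this point: the paragraph assigning the sample-inflated constants to the mean increment already presupposes the mean is swapped last, contradicting the order you then fix to handle the cross term. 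Once the order is corrected, your plan coincides with the paper's argument --- and would in fact go further, since the paper's own write-up stops mid-inequality while bounding the fourth (mean-error) term.
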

\begin{proof}
    \begin{align*}
     |\hat{\gamma}_k-\gamma_k|_\infty\leq &\ 4|\hat{\Delta}_{\mu k}-\Delta_{\mu k}|_\infty + |(\Delta_{1k}-\Delta_{2k})\tilde{\bD}_k\Delta_{\mu k}|_\infty\\
     &\ + |(\hat{\widetilde{\bS}}_{1k}-\hat{\widetilde{\bS}}_{2k})(\hat{\tilde{\bD}}_k-\tilde{\bD}_k)\Delta_{\mu k}|_\infty + |(\hat{\widetilde{\bS}}_{1k}-\hat{\widetilde{\bS}}_{2k})\hat{\tilde{\bD}}_k(\hat{\Delta}_{\mu k}-\Delta_{\mu k})|_\infty
\end{align*}
Note that
\begin{align}\label{lem2_1}
    &|\hat{\Delta}_{\mu k}-\Delta_{\mu k}|_\infty\leq 2\epsilon_{\mu k}.
\end{align}
Next, consider the following identity:
\begin{gather}\label{iden1}
|\bM x|_1 = \sum_i\left |\sum_j M_{ij}x_j\right |\leq \sum_i\sum_j \left |M_{ij}x_j\right |\leq \sum_i\max_j|M_{ij}|\sum_j \left |x_j\right |
\end{gather}
where $\bM$ is a matrix and $x$ is a vector. 

Since
\begin{gather}\label{iden2}
    |\bM x|_\infty\leq \|\bM\|_\infty |x|_1,
\end{gather}
we have
\begin{align}\label{lem2_3}
|(\Delta_{1k}-\Delta_{2k})\widetilde{\bD}_k\Delta_{\mu k}|_\infty=\|\Delta_{1k}-\Delta_{2k}\|_\infty|\widetilde{\bD}_k\Delta_{\mu k}|_1\leq 2\epsilon_{\bS k}|\widetilde{\bD}_k(\tilde{\mu}_{1k}-\tilde{\mu}_{2k})|_1.
\end{align}
Also, consider another identity:
\begin{gather}\label{iden3}
     \|\bM_1\bM_2\|_\infty \leq \|\bM_1\|_{1,\infty}\|\bM_2\|_\infty
\end{gather}
It follows from \eqref{iden2} and \eqref{iden3} that
\begin{align}
    &\ |(\hat{\widetilde{\bS}}_{1k}-\hat{\widetilde{\bS}}_{2k})(\hat{\widetilde{\bD}}_k-\widetilde{\bD}_k)\Delta_{\mu k}|_\infty\nonumber \\
    \leq &\ \|(\widetilde{\bS}_{1k}-\widetilde{\bS}_{2k})(\hat{\widetilde{\bD}}_{k}-\widetilde{\bD}_{k})\|_\infty\ |\Delta_{\mu k}|_1+ \|(\Delta_{1k}-\Delta_{2k})(\hat{\widetilde{\bD}}_{k}-\widetilde{\bD}_{k})\|_\infty\ |\Delta_{\mu k}|_1\nonumber \\
    \leq &\ \|\widetilde{\bS}_{1k}-\widetilde{\bS}_{2k}\|_{1,\infty} \|\hat{\widetilde{\bD}}_{k}-\widetilde{\bD}_{k}\|_\infty |\Delta_{\mu k}|_1+ \|\Delta_{1k}-\Delta_{2k}\|_{1,\infty}\|\hat{\widetilde{\bD}}_{k}-\widetilde{\bD}_{k}\|_\infty |\Delta_{\mu k}|_1\nonumber \\
    &\ \leq 4M^*\epsilon_{\bD k}|\tilde{\mu}_{1k}-\tilde{\mu}_{2k}|_1 +2\tilde{s}_k\epsilon_{\bS k}\epsilon_{\bD k}|\tilde{\mu}_{1k}-\tilde{\mu}_{2k}|_1,\\ 
    &|(\hat{\widetilde{\bS}}_{1k}-\hat{\widetilde{\bS}}_{2k})\hat{\tilde{\bD}}_k(\hat{\Delta}_{\mu k}-\Delta_{\mu k})|_\infty\leq 2(2M+\epsilon_{\bS k})
\end{align}

\end{proof}
\begin{lemma}\label{ordered_dks_concent}
    If the assumptions A1-A4 are satisfied, then under the conditions and notations of Theorem \ref{thm1} we have
    \begin{enumerate}
        \item $\mathrm{P}\left [\max\limits_k|\hat{d}_k - d_k|\leq q_{\eta_4}(T, n,p)\right ]> 1-\frac{T}{p^{\eta_4-2}}$.
        \item $\mathrm{P}\left [\max\limits_k|\hat{d}_{(k)} - d_{(k)}|\leq q_{\eta_4}(T, n,p)\right ]> 1-\frac{T}{p^{\eta_4-2}}$.
    \end{enumerate}
\end{lemma}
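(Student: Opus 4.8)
The plan is to reduce both statements to part 3 of Theorem \ref{thm1}, which already controls $\max_k\|\hat{\widetilde{\bD}}_k - \widetilde{\bD}_k\|_\mathrm{F}$, by means of two elementary deterministic inequalities: the reverse triangle inequality for the Frobenius norm (for part 1), and the $1$-Lipschitz property of the sorting map in the $\ell_\infty$ metric (for part 2). No new probabilistic input beyond Theorem \ref{thm1} is needed, so the whole argument is a clean chain of inequalities holding on the single high-probability event of that theorem.

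For part 1, I would first observe that since $d_k=\|\widetilde{\bD}_k\|_\mathrm{F}$ and $\hat{d}_k=\|\hat{\widetilde{\bD}}_k\|_\mathrm{F}$, the reverse triangle inequality gives $|\hat{d}_k - d_k| = \big|\,\|\hat{\widetilde{\bD}}_k\|_\mathrm{F} - \|\widetilde{\bD}_k\|_\mathrm{F}\,\big| \leq \|\hat{\widetilde{\bD}}_k - \widetilde{\bD}_k\|_\mathrm{F}$ for each $k$, hence $\max_k|\hat{d}_k - d_k| \leq \max_k\|\hat{\widetilde{\bD}}_k - \widetilde{\bD}_k\|_\mathrm{F}$. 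Applying Theorem \ref{thm1}(3) with $\eta=\eta_4$ and noting that, by definition of $q_{\eta_4}$, we have $\max_k M_{G_k}\tilde{s}_k^{1/2}\theta_{\eta_4}^{1/2}(n,p) = q_{\eta_4}(T,n,p)/2 \leq q_{\eta_4}(T,n,p)$, the event $\{\max_k\|\hat{\widetilde{\bD}}_k - \widetilde{\bD}_k\|_\mathrm{F}\leq q_{\eta_4}/2\}$, whose probability exceeds $1-T/p^{\eta_4-2}$, is contained in $\{\max_k|\hat{d}_k - d_k|\leq q_{\eta_4}\}$. This yields part 1 with exactly the claimed probability.

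For part 2, the key step is the auxiliary fact that sorting is nonexpansive in $\ell_\infty$: for any two real sequences $a_1,\dots,a_{T^\prime}$ and $b_1,\dots,b_{T^\prime}$ with order statistics $a_{(1)}\leq\cdots\leq a_{(T^\prime)}$ and $b_{(1)}\leq\cdots\leq b_{(T^\prime)}$, one has $\max_k|a_{(k)}-b_{(k)}|\leq\max_k|a_k-b_k|$. I would prove this via the minimax representation $a_{(k)}=\min_{|S|=k}\max_{i\in S}a_i$: setting $\varepsilon=\max_k|a_k-b_k|$, for every $S$ of size $k$ we get $\max_{i\in S}a_i\leq\max_{i\in S}b_i+\varepsilon$, and minimizing over $S$ gives $a_{(k)}\leq b_{(k)}+\varepsilon$; the symmetric bound follows identically, so $|a_{(k)}-b_{(k)}|\le\varepsilon$. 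Applying this with $a_k=\hat{d}_k$ and $b_k=d_k$ gives $\max_k|\hat{d}_{(k)}-d_{(k)}|\leq\max_k|\hat{d}_k-d_k|$, so part 2 holds on the same event and with the same probability bound as part 1.

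The argument involves no genuine obstacle; the only point requiring care is the Lipschitz property of the order statistics, whose standard minimax proof I sketched above. One should also confirm that Theorem \ref{thm1} can legitimately be invoked with the exponent $\eta_4$ supplied by assumption A4, so that the sample-size condition $\min\{n_1,n_2\}>C\max_k\bar{\sigma}_k^{-2}(\eta_4\ln p+\ln 4)$ holds; after that the two deterministic inequalities do all the remaining work.
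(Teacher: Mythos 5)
Your proposal is correct and follows essentially the same route as the paper: part 1 via the reverse triangle inequality for the Frobenius norm combined with Theorem \ref{thm1}(3) (noting $q_{\eta_4}$ is twice the theorem's bound), and part 2 by reducing the ordered differences to the unordered maximum, $\max_k|\hat{d}_{(k)}-d_{(k)}|\leq \max_k|\hat{d}_k-d_k|$. The only difference is how that last elementary fact is established --- the paper uses a pigeonhole argument on the indices $\{i:u_i\geq u_{(k)}\}$ and $\{j:v_j\leq v_{(k)}\}$, while you use the minimax representation $a_{(k)}=\min_{|S|=k}\max_{i\in S}a_i$ --- but these are two standard proofs of the same nonexpansiveness property, so the substance of the argument is identical.
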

\begin{proof}
    \begin{enumerate}
    \item 
Recall the definition of $d_k$ and observe that
\begin{align*}
     &\ |\hat{d}_k-d_k |=\left |\|\hat{\widetilde{\bD}}_k\|_\mathrm{F} - \|{\widetilde{\bD}}_k\|_\mathrm{F}\right |\leq \|\hat{\widetilde{\bD}}_k - {\widetilde{\bD}}_k\|_\mathrm{F},\\
     \text{i.e., }&\ \max\limits_k |\hat{d}_k-d_k |\leq \max\limits_k\|\hat{\widetilde{\bD}}_k - {\widetilde{\bD}}_k\|_\mathrm{F}.
\end{align*}
Therefore, $$ \mathrm{P}\left [\max\limits_k\|\hat{\widetilde{\bD}}_k - \widetilde{\bD}_k\|_\mathrm{F}\leq \max_k M_{G_k}\tilde{s}_k^\frac{1}{2}\theta_\eta^\frac{1}{2}(n,p) \right ]\leq \mathrm{P}\left [\max\limits_k |\hat{d}_k-d_k |\leq \max_k M_{G_k}\tilde{s}_k^\frac{1}{2}\theta_\eta^\frac{1}{2}(n,p) \right ].$$
The proof follows from Theorem \ref{thm1}(b).
\item Let $(u_1,\ldots, u_{T^\prime})^\top$ and $(v_1,\ldots, v_{T^\prime})^\top$ denote two vectors in $\mathbb{R}^{T^\prime}$. Then, for any $1\leq k\leq T^\prime,$ we have $|u_{(k)}-v_{(k)}|\leq |u_i - v_j|$ for $1\leq i,j\leq T^\prime,$ where $i$ and $j$ are such that $u_i\geq u_{(k)}$ and $v_j\leq v_{(k)}.$ There are $(T^\prime-k+1)$ and $k$  such choices for $i$ and $j$, respectively. It follows from the {\it pigeon-hole principle} that for each $1\leq k \leq T^\prime$ there exists at least one $l$ satisfying $1\leq l \leq T^\prime$ such that $|u_{(k)}-v_{(k)}|\leq |u_l - v_l|$ (see \cite{wainwright_2019}). Therefore,
\begin{align}\label{php}
&\ |u_{(k)}-v_{(k)}|\leq \max\limits_{1\leq l\leq T^\prime}|u_l - v_l| \text{ for all }1\le k\le T^\prime\nonumber \\
\Rightarrow &\ \max\limits_{1\leq k\leq T^\prime}|u_{(k)}-v_{(k)}|\leq \max\limits_{1\leq l\leq T^\prime}|u_l - v_l|.
\end{align}
Using this result for the vectors $(\hat{d}_{1},\ldots, \hat{d}_{T^\prime})^\top$ and $(d_{1},\ldots, d_{T^\prime})^\top ,$ we obtain the following:
\begin{align*}
&\ \max\limits_{k}|\hat{d}_{(k)} - {d}_{(k)}|> \max_k M_{G_k}\tilde{s}_k^\frac{1}{2}\theta^\frac{1}{2}_\eta(n,p)\Rightarrow \max\limits_{k}|\hat{d}_{k} - {d}_{k}|> \max_k M_{G_k}\tilde{s}_k^\frac{1}{2}\theta^\frac{1}{2}_\eta(n,p)\\
\text{i.e., } &\ P\left [\max\limits_{k}|\hat{d}_{(k)} - {d}_{(k)}|> \max_k M_{G_k}\tilde{s}_k^\frac{1}{2}\theta^\frac{1}{2}_\eta(n,p)\right ]
\leq P\left [ \max\limits_{k}|\hat{d}_{k} - {d}_{k}|> \max_k M_{G_k}\tilde{s}_k^\frac{1}{2}\theta^\frac{1}{2}_\eta(n,p)\right ].
\end{align*}
The proof follows from part (a).
\end{enumerate}

\end{proof}
\begin{lemma}\label{surescreen_d}
    If assumptions A1-A4 are satisfied, then 
    \begin{align*}
    \mathrm{P}\left [\hat{r}^d_{T_{-\bD}}<\max\limits_{1\leq k\leq T_{\bD}-1}\hat{r}^d_{T_{-\bD}+k}\right ]<\frac{T}{p^{\eta_4-2}}.
    \end{align*}
\end{lemma}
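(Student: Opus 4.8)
The plan is to condition on the uniform order-statistic concentration event supplied by Lemma~\ref{ordered_dks_concent}(b) and to show that, on that event, the ``transition'' ratio $\hat{r}^d_{T_{-\bD}}$ (here $T_{-\bD}=T^\prime-T_\bD=T_0$, the number of noise frequencies) strictly exceeds every ``within-signal'' ratio $\hat{r}^d_{T_0+k}$, $1\le k\le T_\bD-1$. Writing $q=q_{\eta_4}(T,n,p)$ and $E=\{\max_k|\hat{d}_{(k)}-d_{(k)}|\le q\}$, Lemma~\ref{ordered_dks_concent}(b) gives $\mathrm{P}(E^c)<T/p^{\eta_4-2}$. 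Hence it suffices to prove the containment $\{\hat{r}^d_{T_0}<\max_{1\le k\le T_\bD-1}\hat{r}^d_{T_0+k}\}\subseteq E^c$, i.e.\ that on $E$ the transition ratio dominates all within-signal ratios.

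First I would lower-bound the transition ratio. Since $d_{(1)}=\cdots=d_{(T_0)}=0$, on $E$ the denominator obeys $\hat{d}_{(T_0)}\le q$, while the consequence of A2 recorded above ($d_{(T_0+1)}>2q$, taking the concentration index compatible with $\eta_1$) yields $\hat{d}_{(T_0+1)}\ge d_{(T_0+1)}-q>q$. Therefore
\[
\hat{r}^d_{T_0}=\frac{\hat{d}_{(T_0+1)}}{\hat{d}_{(T_0)}}\ge\frac{d_{(T_0+1)}-q}{q}>\frac{d_{(T_0+1)}}{2q},
\]
where the last step uses $q<d_{(T_0+1)}/2$. (If exact support recovery in Theorem~\ref{thm1}(1) forces $\hat{d}_{(T_0)}=0$, the ratio is $+\infty$ and the claim is immediate, so the bounded case is the one requiring work.)

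Next I would upper-bound the within-signal ratios. For $T_0+1\le j\le T^\prime-1$ both $d_{(j)}$ and $d_{(j+1)}$ are at least $d_{(T_0+1)}>2q$, so on $E$ one has $\hat{d}_{(j)}\ge d_{(j)}-q>d_{(j)}/2$ and $\hat{d}_{(j+1)}\le d_{(j+1)}+q<\tfrac32 d_{(j+1)}$; hence $\hat{r}^d_j<3\,d_{(j+1)}/d_{(j)}=3r^d_j$, and so $\max_{1\le k\le T_\bD-1}\hat{r}^d_{T_0+k}<3\max_{T_0+1\le j\le T^\prime-1}r^d_j$. Assumption A3 bounds the right-hand side by $O\!\big(d_{(T_0+1)}/q\big)$, which is exactly the scale of the lower bound $d_{(T_0+1)}/(2q)$ obtained above. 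Since A3 is calibrated so that $3\max_j r^d_j<d_{(T_0+1)}/(2q)$, we get $\hat{r}^d_{T_0}>\max_k\hat{r}^d_{T_0+k}$ on $E$, establishing the containment and therefore $\mathrm{P}[\hat{r}^d_{T_0}<\max_k\hat{r}^d_{T_0+k}]\le\mathrm{P}(E^c)<T/p^{\eta_4-2}$.

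The main obstacle is this final comparison of scales: the crude perturbation bounds cost a factor $3$ on the signal side and a factor $\tfrac12$ on the transition side, so one must check that the implicit constant in the $O(\cdot)$ of A3 is small enough, and that the concentration index $\eta_4$ is compatible with the index $\eta_1$ appearing in the A2-consequence and in A3 (so that $q_{\eta_4}\asymp q_{\eta_1}$ and $d_{(T_0+1)}>2q_{\eta_4}$ genuinely holds). Everything else is routine bookkeeping with the uniform bound of Lemma~\ref{ordered_dks_concent}(b), whose proof already absorbs the pigeonhole step passing from $\max_k|\hat{d}_k-d_k|$ to $\max_k|\hat{d}_{(k)}-d_{(k)}|$.
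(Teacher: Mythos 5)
Your proposal follows the same route as the paper's proof: condition on the event $E=\{\max_k|\hat d_{(k)}-d_{(k)}|\le q_{\eta_4}(T,n,p)\}$ supplied by Lemma~\ref{ordered_dks_concent}(b), show that on $E$ the transition ratio $\hat r^d_{T_0}$ strictly dominates every within-signal ratio, and conclude from $\mathrm{P}(E^c)<T/p^{\eta_4-2}$. The identification $T_{-\bD}=T_0$, the lower bound $\hat r^d_{T_0}\ge (d_{(T_0+1)}-q)/q$, and the two-sided perturbation of the within-signal ratios are exactly the paper's steps.

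However, your final scale comparison has a genuine constant-factor gap, and it sits precisely where the paper's proof does its real work. You symmetrically weaken both sides before comparing: the transition ratio is lower-bounded by $d_{(T_0+1)}/(2q)$ and each within-signal ratio is upper-bounded by $3r^d_j$ (via $\hat d_{(j)}>d_{(j)}/2$ and $\hat d_{(j+1)}<\tfrac32 d_{(j+1)}$), so you need $\max_j r^d_j< d_{(T_0+1)}/(6q)$. But the quantitative form of A3 that the paper actually invokes in the appendix (labeled A7 there) is $\max_j r^d_j< d_{(T_0+1)}/\left(3q_{\eta_5}\right)$, which is weaker by a factor of $2$ than what your bounds require; your appeal to ``A3 is calibrated so that $3\max_j r^d_j<d_{(T_0+1)}/(2q)$'' is an assertion, not something the stated assumption delivers. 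The paper closes the comparison without this loss by exact algebra: from $3q\,d_{(j+1)}<d_{(T_0+1)}d_{(j)}$ and the order-statistic monotonicity $d_{(j)},\,d_{(T_0+1)}\le d_{(j+1)}$ it gets $q\left(d_{(j+1)}+d_{(j)}+d_{(T_0+1)}\right)<d_{(T_0+1)}d_{(j)}$, then adds $q^2$ to both sides and factors, yielding
\[
\frac{d_{(j+1)}+q}{d_{(j)}-q}<\frac{d_{(T_0+1)}-q}{q},
\]
i.e., the perturbed within-signal ratios are dominated by the perturbed transition ratio with no wasted constants. So the step you defer as ``routine bookkeeping'' about the implicit constant is in fact the heart of the lemma: with the constant the paper assumes, your cruder $\times 3$ / $\times\tfrac12$ bounds do not close, and the sharper factoring argument is what makes the proof go through.
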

\begin{proof}
    \begin{align}\label{maxrat}
    &\ \max\limits_k|\hat{d}_{(k)} - d_{(k)}|\leq q_{\eta_4}(T, n,p)\nonumber \\
    \Rightarrow &\ \hat{d}_{(T_{0})}<q_{\eta_4}(T, n,p),\text{ and}\nonumber\\
    &\hspace{1cm}{d}_{(T_{-\bD}+k)}-q_{\eta_4}(T, n,p) <\hat{d}_{(T_{-\bD}+k)}<{d}_{(T_{-\bD}+k)}+q_{\eta_4}(T, n,p)\ \forall k = 1,\ldots,  T_{\bD}-1,\\
    \Rightarrow &\ \frac{\hat{d}_{(T_{-\bD}+1)}}{\hat{d}_{(T_{0})}}>\frac{{d}_{(T_{-\bD}+1)}-q_{\eta_4}(T, n,p)}{q_{\eta_4}(T, n,p)},\text{ and }\nonumber \\
    &\hspace{1cm}\frac{\hat{d}_{(T_{-\bD}+k+1)}}{\hat{d}_{(T_{-\bD}+k)}}<\frac{{d}_{(T_{-\bD}+k+1)}+q_{\eta_4}(T, n,p)}{{d}_{(T_{-\bD}+k)}-q_{\eta_4}(T, n,p)}\forall k = 1,\ldots,  T_{\bD}-1,\nonumber\\
    \Rightarrow &\ \hat{r}^d_{(T_{-\bD})}>\frac{{d}_{(T_{-\bD}+1)}-q_{\eta_4}(T, n,p)}{q_{\eta_4}(T, n,p)},\text{ and }\nonumber \\
    &\hspace{1cm}\max \limits_{1\leq k\leq T_{\bD}-1}\frac{\hat{d}_{(T_{-\bD}+k+1)}}{\hat{d}_{(T_{-\bD}+k)}}<\max \limits_{1\leq k\leq T_{\bD}-1}\frac{{d}_{(T_{-\bD}+k+1)}+q_{\eta_4}(T, n,p)}{{d}_{(T_{-\bD}+k)}-q_{\eta_4}(T, n,p)}
\end{align}

Fix a $k\in\{1,\ldots, T^\prime\}$. It follows from A7 that 
\begin{align*}
    &\ \frac{d_{(T_{-\bD}+k+1)}}{d_{(T_{-\bD}+k)}}<\frac{d_{(T_{-\bD}+1)}}{3q_{\eta_5}(T, n,p)}\nonumber\\
    \Rightarrow &\ 3q_{\eta_5}(T,n,p)d_{(T_{-\bD}+k+1)}<d_{(T_{-\bD}+1)}d_{(T_{-\bD}+k)}\nonumber\\
    \Rightarrow &\ q_{\eta_5}(T,n,p)(d_{(T_{-\bD}+k+1)}+d_{(T_{-\bD}+k)} + d_{(T_{-\bD}+1)})<d_{(T_{-\bD}+1)}d_{(T_{-\bD}+k)}\nonumber\\
     \Rightarrow &\ q_{\eta_4}(T, n,p)\left ({d}_{(T_{-\bD}+1)}+{d}_{(T_{-\bD}+k)}+{d}_{(T_{-\bD}+k+1)}\right ) < {d}_{(T_{-\bD}+1)}{d}_{(T_{-\bD}+k)}\nonumber\\
     \Rightarrow &\ {d}_{(T_{-\bD}+k+1)}q_{\eta_4}(T, n,p) < {d}_{(T_{-\bD}+1)}{d}_{(T_{-\bD}+k)} - q_{\eta_4}(T, n,p)\left ({d}_{(T_{-\bD}+1)}+{d}_{(T_{-\bD}+k)}\right )\nonumber\\
     \Rightarrow &\ {d}_{(T_{-\bD}+k+1)}q_{\eta_4}(T, n,p)+ q^2_{\eta_4}(T, n,p)\nonumber\\
     &\hspace{1cm}< {d}_{(T_{-\bD}+1)}{d}_{(T_{-\bD}+k)} - q_{\eta_4}(T, n,p)\left ({d}_{(T_{-\bD}+1)}+{d}_{(T_{-\bD}+k)}\right )+q^2_{\eta_4}(T, n,p)\nonumber\\
     \Rightarrow &\ q_{\eta_4}(T, n,p)\left ({d}_{(T_{-\bD}+k+1)}+ q_{\eta_4}(T, n,p)\right )\nonumber\\
     &\hspace{1cm}< \left ({d}_{(T_{-\bD}+k)}- q_{\eta_4}(T, n,p)\right )\left ({d}_{(T_{-\bD}+1)}- q_{\eta_4}(T, n,p)\right )\nonumber\\
     \Rightarrow &\ \frac{{d}_{(T_{-\bD}+k+1)}+q_{\eta_4}(T, n,p)}{{d}_{(T_{-\bD}+k)}-q_{\eta_4}(T, n,p)}<\frac{{d}_{(T_{-\bD}+1)}-q_{\eta_4}(T, n,p)}{q_{\eta_4}(T, n,p)}\nonumber
\end{align*}
Therefore,
\begin{align}\label{maxrat1}
    \max \limits_{1\leq k\leq T_{\bD}-1}\frac{{d}_{(T_{-\bD}+k+1)}+q_{\eta_4}(T, n,p)}{{d}_{(T_{-\bD}+k)}-q_{\eta_4}(T, n,p)}<\frac{{d}_{(T_{-\bD}+1)}-q_{\eta_4}(T, n,p)}{q_{\eta_4}(T, n,p)}
\end{align}
Therefore, if A7 is satisfied, then combining \eqref{maxrat} and \eqref{maxrat1}, we obtain
\begin{align*}
    &\ \max\limits_k|\hat{d}_{(k)} - d_{(k)}|\leq q_{\eta_4}(T, n,p)\Rightarrow\max \limits_{1\leq k\leq T_{\bD}-1}\frac{\hat{d}_{(T_{-\bD}+k+1)}}{\hat{d}_{(T_{-\bD}+k)}}<\hat{r}^d_{(T_{-\bD})}\nonumber\\
    \text{i.e., }&\ \mathrm{P}\left [\max\limits_k|\hat{d}_{(k)} - d_{(k)}|\leq q_{\eta_4}(T, n,p)\right ]\leq \mathrm{P}\left [\max \limits_{1\leq k\leq T_{\bD}-1}\frac{\hat{d}_{(T_{-\bD}+k+1)}}{\hat{d}_{(T_{-\bD}+k)}}<\hat{r}^d_{(T_{-\bD})}\right ].
\end{align*}
The proof follows from Lemma \ref{ordered_dks_concent}(b).
\end{proof}

\noindent\textbf{Proof of Theorem \ref{thm4} :} Let us assume that $\Omega^\bD_T$ is not a subset of $\hat{\Omega}^\bD_T$. Now, $\Omega^\bD_T\not\subseteq \hat{\Omega}^\bD_T$ means that the set $\{\hat{d}_k\leq \hat{d}_{(T_{0})}\text{ for some }k\text{ with }\omega_k\in \Omega^\bD_T\}$ is non-empty. Thus,
\begin{align*}
&\ P\left [\Omega^\bD_T\not\subseteq \hat{\Omega}^\bD_T\right ] \\
=&\ P\left[\hat{d}_k\leq \hat{d}_{(\hat{T}_{0})}\text{ for some }k\text{ with }\omega_k\in \Omega^\bD_T\right ] \\
\leq &\ P\left[\hat{d}_k\leq \hat{d}_{(\hat{T}_{0})}\text{ for some }k\text{ with }\omega_k\in \Omega^\bD_T,\max\limits_{1\leq l\leq T^{\prime}}|\hat{d}_{(l)} - d_{(l)}|\leq q_{\eta_4}(T, n,p) \right]\\
&\ +P\left [\max\limits_{1\leq l\leq T^{\prime}}|\hat{d}_{(l)} - d_{(l)}|> q_{\eta_4}(T, n,p) \right ]\\
\leq &\ \sum\limits_{k:\omega_k\in\Omega^\bD_T}P\left [\hat{d}_k\leq \hat{d}_{(\hat{T}_{0})},\max\limits_{1\leq l\leq T^{\prime}}|\hat{d}_{(l)} - d_{(l)}|\leq q_{\eta_4}(T, n,p) \right ]+P\left [\max\limits_{1\leq l\leq T^{\prime}}|\hat{d}_{(l)} - d_{(l)}|> q_{\eta_4}(T, n,p) \right ].
\end{align*}

Using Lemma \ref{ordered_dks_concent}(b), we have $P\left [\max\limits_{1\leq l\leq T^{\prime}}|\hat{d}_{(l)} - d_{(l)}|> q_{\eta_4}(T, n,p) \right ]<\frac{T}{p^{\eta_4-2}}$. Consequently,
\begin{align}\label{e2}
P\left [\Omega^\bD_T\not\subseteq \hat{\Omega}^\bD_T\right ]\le \sum\limits_{k:\omega_k\in\Omega^\bD_T}P\left [\hat{d}_k\leq \hat{d}_{(\hat{T}_{0})},\max\limits_{1\leq l\leq T^{\prime}}|\hat{d}_{(l)} - d_{(l)}|\leq q_{\eta_4}(T, n,p) \right ]+\frac{T}{p^{\eta_4-2}}.
\end{align}
Under the conditions in Lemma \ref{surescreen_d}, we have \begin{align}\label{nref17}
&\max_{1\leq l\leq T^{\prime}}|\hat{d}_{(l)} - d_{(l)}|\leq q_{\eta_4}(T, n,p)\Rightarrow \hat{r}^d_{T_{-\bD}} > \max\limits_{1\leq l\leq (T_{\bD} -1)}\hat{r}^d_{T_{-\bD}+l}\nonumber\\
\Rightarrow &\  \argmax\limits_{1\leq l \leq (T^\prime -1)}\hat{r}^d_{l}\leq {T_{-\bD}}\Rightarrow \hat{T}_{-\bD}\leq {T_{-\bD}}\Rightarrow  \hat{d}_{\hat{T}_{-\bD}}\leq \hat{d}_{{T_{-\bD}}}.
\end{align}

\noindent Therefore, it follows from \eqref{e2} that
\begin{align}\label{sspproof}
&\ P[\Omega^\bD_T\not\subseteq \hat{\Omega}^\bD_T]\nonumber \\
\leq &\sum\limits_{k:\omega_k\in\Omega^\bD_T}P\left [\hat{d}_k\leq \hat{d}_{(\hat{T}_{0})},\max_{1\leq l\leq T^{\prime}}|\hat{d}_{(l)} - d_{(l)}|\leq q_{\eta_4}(T, n,p) \right ]+\frac{T}{p^{\eta_4-2}}\nonumber \\
\leq &\sum\limits_{k:\omega_k\in\Omega^\bD_T}P\left [\hat{d}_k\leq \hat{d}_{(T_{0})},\max_{1\leq l\leq T^{\prime}}|\hat{d}_{(l)} - d_{(l)}|\leq q_{\eta_4}(T, n,p) \right ]+\frac{T}{p^{\eta_4-2}}.
\end{align}
Fix a $k$ with $\omega_k\in\Omega^\bD_T$, and observe that
\begin{align}\label{sscreen1}
&\ P\left [\hat{d}_k\leq \hat{d}_{(T_{0})},\max_{1\leq l\leq T^{\prime}}|\hat{d}_{(l)} - d_{(l)}|\leq q_{\eta_4}(T, n,p) \right ]\nonumber \\
\leq &\ P\left [\hat{d}_k\leq \hat{d}_{(T_{0})},\max_{1\leq l\leq T^{\prime}}|\hat{d}_{l} - d_{l}|\leq q_{\eta_4}(T, n,p),\max_{1\leq l\leq T^{\prime}}|\hat{d}_{(l)} - d_{(l)}|\leq q_{\eta_4}(T, n,p) \right ]\nonumber \\
\leq&\ P\left [{d}_k - q_{\eta_4}(T, n,p)\leq \hat{d}_k\leq \hat{d}_{(T_{0})}\leq q_{\eta_4}(T, n,p)\right ]\nonumber \\
\leq &\ \mathbb{I}[{d}_k - q_{\eta_4}(T, n,p)\leq q_{\eta_4}(T, n,p)]\nonumber\\
=&\ \mathbb{I}[{d}_k \leq 2 q_{\eta_4}(T, n,p)]\leq \mathbb{I}[{d}_k \leq 2 q_{\eta_5}(T, n,p)]=0\text{ [follows from A5]}.
\end{align}
Therefore, combining \eqref{sspproof} and \eqref{sscreen1}, we obtain
$$P[\Omega^\bD_T\not\subseteq \hat{\Omega}^\bD_T]\leq \frac{T}{p^{\eta_4-2}}.$$
This completes the proof.\hfill \QEDB

\end{document}